\newcommand{\email}[1]{\href{mailto:#1}{\tt{\nolinkurl{#1}}}}
\newcommand{\orcid}[1]{ORCID: \href{https://orcid.org/#1}{\tt{\nolinkurl{#1}}}}
\newtheorem*{theorem*}{Theorem}
\newtheorem{lemma}{Lemma}
\newtheorem*{lemma*}{Lemma}
\newtheorem*{property*}{Property}
\newtheorem{remark}{Remark}
\newtheorem{definition}{Definition}
\newtheorem{corollary}{Corollary}
\newtheorem*{assumption*}{Assumption}
\newtheorem*{prop*}{Proposition}
\newtheorem{prop}{Proposition}
\newtheorem*{setting*}{Setting}
\definecolor{mycolor}{HTML}{FFE9D1}
\title{Probability Signature: Bridging Data Semantics and Embedding Structure in Language Models}
\author[1,2]{Junjie Yao}
\author[1,2,*]{Zhi-Qin John Xu}
\affil[1]{Institute of Natural Sciences, MOE-LSC, Shanghai Jiao Tong University}
\affil[2]{School of Mathematical Sciences, Shanghai Jiao Tong University}
\affil[*]{Corresponding author}
\begin{document}

\maketitle

\begin{abstract}
The embedding space of language models is widely believed to capture the semantic relationships; for instance, embeddings of digits often exhibit an ordered structure that corresponds to their natural sequence. However, the mechanisms driving the formation of such structures remain poorly understood. In this work, we interpret the embedding structures via the data distribution. We propose a set of probability signatures that reflect the semantic relationships among tokens. Through experiments on the composite addition tasks using the linear model and feedforward network, combined with theoretical analysis of gradient flow dynamics, we reveal that these probability signatures significantly influence the embedding structures. We further generalize our analysis to large language models (LLMs) by training the Qwen2.5 architecture on the subsets of the Pile corpus. Our results show that the probability signatures are faithfully aligned with the embedding structures, particularly in capturing strong pairwise similarities among embeddings. Our work uncovers the mechanism of how data distribution guides the formation of embedding structures, establishing a novel understanding of the relationship between embedding organization and semantic patterns.
\end{abstract}

\section{Introduction}
In recent years, deep neural network-based large language models (LLMs) have demonstrated remarkable performance~\citep{comanici2025gemini25pushingfrontier,openai2024gpt4technicalreport,deepseekai2025deepseekr1}. The development of these models has largely followed what Richard Sutton termed ``the bitter lesson''--that the most effective approach to improving AI performance has historically been to leverage greater computational resources, larger models, and more data, rather than incorporating human knowledge or specialized architectures~\citep{sutton2019bitter}. This trend has been formalized through scaling laws, which quantify the relationship between model performance and factors such as model size, dataset size, and computational budget through power law relationships~\citep{kaplan2020scaling}. While these scaling laws provide valuable quantitative predictions for model performance, they also reveal a concerning limitation: the power law relationship suggests that achieving further significant improvements may require prohibitively large increases in model and data size, making continued scaling increasingly impractical and resource-intensive. 

One promising approach to address these limitations is to develop a deeper understanding of the underlying mechanisms that drive transformer models' success in natural language processing (NLP). The No Free Lunch theorem establishes that no single algorithm can perform optimally across all problem domains, highlighting the fundamental importance of understanding both the characteristics of the data and the properties of the algorithms that process it~\citep{wolpert1997no}. 
Recent research has made significant progress in uncovering key properties of deep learning models, including the edge of stability phenomenon~\citep{wu2018sgd,cohen2021gradient}, frequency principle~\citep{xu2019frequency,xu2022overview}, attention patterns~\citep{elhage2021mathematical,olsson2022context,pmlr-v119-bhojanapalli20a}, parameter distribution properties~\citep{kovaleva-etal-2021-bert,dar2023analyzing}, condensation phenomenon~\citep{luo2021phase,xu2025overview},   and embedding structure~\citep{cai2021isotropy}. 
There has also been some investigation into data characteristics—such as the power-law decay of correlations between elements (like pixels) as a function of their distance~\citep{ruderman1994statistics}. However, there is a significant gap in understanding how these two fundamental aspects—algorithmic properties and data characteristics—interact to produce the remarkable performance.

The embedding space, which acts as the encoder of the language tokens, therefore provides an ideal entry point for investigating how algorithmic properties and data characteristics interact. Ideally, the structure of embeddings should reflect the semantic relationships among tokens. A concrete example involves digits such as $0, 1, 2, \dots$, which possess a natural ordering. Their embedding vectors accordingly display an ordered structure consistent with this numerical sequence, reflecting basic reasoning capabilities in mathematical tasks~\citep{mikolov2013linguistic,ethayarajh2019towards,zhang2024initial,yao2025an}.
However, the cause of the consistency between embedding structures and semantic structures is still an open question, and the driving factors of the embedding structure are still not well characterized.

In this work, we identify a set of probability signatures that encode the semantic information in the data into the structure of the embedding space of language models. Such probability signatures are constructed based on label distribution, input distribution, input/output co-occurrence distribution, etc, that systematically capture inherent token-level relationships and reflect semantic structures. This result is achieved via utilizing an embedding-based model with gradient flow analysis of embedding vectors and unembedding vectors for well-designed variable-controlled experiments. We further extend our findings to LLMs with realistic corpora, such as the Qwen2.5 architecture~\citep{qwen2.5} and subsets of the Pile dataset~\citep{gao2020pile,biderman2022datasheet}. 
The analysis approach with the controlled experiments offers a promising methodology to uncover more and more probability signatures that can bridge the data semantics and embedding structure in language models.

\section{Related Work}

\paragraph{Parameter analysis in LLMs} Investigating the underlying parameter properties in LLMs is crucial for understanding the foundation of models. Some works focus on the specific modules in models. \cite{elhage2021mathematical,olsson2022context} uncover mechanisms such as induction heads from the attention module. \cite{pmlr-v119-bhojanapalli20a} reveals the rank-collapse phenomenon of the attention matrix. \cite{geva-etal-2021-transformer,geva-etal-2022-transformer,dai-etal-2022-knowledge} investigates the characteristics and functions of the FFN in LLMs. Additionally, analysis of a single neuron has been widely employed in mechanism interpretation, particularly in circuits analysis~\cite{hanna2023how,wang2023interpretability,hanna2024have,wang2025towards}, sparse autoencoders (SAE)~\cite{huben2024sparse,bricken2023monosemanticity}, transcoders~\cite{NEURIPS2024_2b8f4db0}, and cross-layer transcoders (CLT)~\cite{ameisen2025circuit}. There are also some studies investigating the global properties of all parameters. \cite{dar2023analyzing,katz-etal-2024-backward} introduce a framework for interpreting all parameters of Transformer models by projecting them into the embedding space. \cite{kovaleva-etal-2021-bert,super-weight} provide an analysis of the parameter distribution, demonstrating the significance of these outliers. In this work, we will focus on the embedding space, explaining the formation of its structure from both experimental and theoretical perspectives.

\paragraph{Embedding structure and representation learning} Since the introduction of static word embeddings by \cite{mikolov2013efficientestimationwordrepresentations,pennington-etal-2014-glove} and the adoption of contextualized embeddings~\citep{devlin2019bertpretrainingdeepbidirectional,peters-etal-2018-deep}, significant attention has been devoted to analyzing embedding properties. \cite{gao2018representation,ethayarajh-2019-contextual,timkey-van-schijndel-2021-bark} explore the anisotropy of embedding space, while \cite{cai2021isotropy} show that embeddings exhibit isotropy within clusters.  \cite{NEURIPS2022_grokking} offers insights into grokking by emphasizing the role of well-organized embedding structures. \cite{zhang2024initial} establishes a connection between embedding structure and model generalization, and \cite{yao2025an} provides an analysis of this relationship. In contrast to prior work, our study focuses on the connection between embedding structure and data properties, offering a novel insight for understanding how embeddings are organized.

\section{Embedding-based Model}
We first explain the basic notation of embedding space. Given a vocabulary $\fV\subset \sN^{+}$ with size $d_{\rm vob}$. We denote a trainable matrix $\vW^E\in\sR^{d\times d_{\rm vob}}$ as the embedding matrix for $\fV$, where $d$ is the hidden dimension. For any $x\in\fV$, we denote $\ve_x\in\sR^{d_{\rm vob}}$ as its one-hot vector and $\vW^E_x:=\vW^E\ve_x$ as the embedding vector of $x$, which is intuitively the $x$-th column of $\vW^E$. For a sequence $\vX=\left[x_1,x_2,\cdots,x_L\right]\subset\fV$, we define its one-hot representation as $\ve_{\vX}:=\left[\ve_{x_1},\ve_{x_2},\cdots,\ve_{x_L}\right]\in\sR^{d_{\rm vob}\times L}$ and $\vW^E_{\vX}:=\left[\vW^E_{x_1},\vW^E_{x_2},\cdots,\vW^E_{x_L}\right]\in\sR^{d\times L}$ as the embedding sequence of $\vX$. Similarly, $\vW^U\in\sR^{d_{\rm vob}\times d}$ represents the unembedding matrix and $\vW^U_{\nu}:=\vW^{U,T}\ve_{\nu}$ (the $\nu$-th row of $\vW^U$) means the unembedding vector for any $\nu\in\fV$. 

We denote the models functioning on the embedding of the input sequence as embedding-based models. We provide the following formulation:
\begin{definition}
Given a vocabulary $\fV\subset \sN^{+}$ with size $d_{\rm vob}$ and a sequence $\vX\in\fV^{L}$ with length $L$. An embedding-based model $F$ taking $\vX$ as input could be formulated as  
\begin{equation*}
    F\left(\vX\right)=\vW^{U}G\left(\vW^{E}_{\vX}\right),
\end{equation*}
where $G$ means the mapping in the hidden space. 

\end{definition}
Embedding-based models have been widely applied in various domains, particularly in NLP. Investigating the characteristics of the embedding space via the gradient flow dynamics is essential for understanding the embedding structures. Given a dataset $\left\{\left(\vX^i, y^i\right)\right\}_{i=1}^N$, we utilize cross-entropy as the loss function:
\begin{equation*}
    \ell^i = -\log \text{Softmax}\left(F\left(\vX^i\right)\right)_{y^i}
     = -\log \frac{\exp F\left(\vX^i\right)_{y^i}}{\sum_{j=1}^{d_{\rm vob}}\exp F\left(\vX^i\right)_{j}}.
\end{equation*}
Denote that $\vp^i = {\rm Softmax}\left(F\left(\vX^i\right)\right)$. Let $\odot$ represent the Hadamard (element-wise) product and $T$ mean the transpose, we obtain the following results.

\begin{prop}\label{prop:dynamics_emb}
    Given an embedding-based model $F$ with an embedding matrix $\vW^{E}$. For any token $x\in\fV$, the gradient flow of $\vW^{E}_x$ (the $x$-th column of $\vW^E$) can be formulated as:
    \begin{equation*}
        \frac{d \vW^{E}_x}{dt} =\sum_{\nu\in\fV}\frac{r_{x,\nu}}{N_{x,\nu}}\left(\vW^{{U}, T}\ve_{\nu}\right)\odot\sum_{i=1}^{N_{x,\nu}}G^{(1)}\left(\vW^E_{\vX_{\left(x,\nu\right)}^i}\right) -\frac{r^{\rm in}_x}{N^{\rm in}_x}\sum_{i=1}^{N^{\rm in}_x}G^{(1)}\left(\vW^E_{\vX^i_x}\right)\odot\left(\vW^{{U}, T}\vp_{x}^i\right),
    \end{equation*} 
    where $N^{\rm in}_x, N_{x,\nu}$ denotes the count of sequences containing $x$ and the count of sequences containing $x$ with label $\nu$, $r^{\rm in}_x=\frac{N^{\rm in}_x}{N},r_{x,\nu}=\frac{N_{s,\nu}}{N}$, $G^{(1)}$ represents the derivative of $G$ with respect to $\vW^{E}_x$. $\vX^i_{x}$ is the $i$-th sequence containing $x$ and $\vX^i_{\left(x,\nu\right)}$ denotes the $i$-th sequence containing $x$ with label $\nu$. $\vp_{x}^i$ means the output probability of the model over $\vX^i_{x}$. 
\end{prop}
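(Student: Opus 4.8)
The plan is to differentiate the total loss $\mathcal{L} = \frac{1}{N}\sum_{i=1}^N \ell^i$ with respect to $\vW^E_x$ by applying the chain rule through the softmax and the composition $F(\vX) = \vW^U G(\vW^E_{\vX})$, then carefully bookkeep which data points actually depend on $\vW^E_x$. First I would recall the standard cross-entropy gradient fact: for a single example, $\partial \ell^i / \partial F(\vX^i) = \vp^i - \ve_{y^i}$, so that $\partial \ell^i / \partial (G(\vW^E_{\vX^i})) = \vW^{U,T}(\vp^i - \ve_{y^i})$. Next I would observe that $\vW^E_x$ enters $F(\vX^i)$ only through the columns of $\vW^E_{\vX^i}$ equal to $x$, i.e. only for sequences that contain the token $x$; for all other sequences the gradient contribution vanishes. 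This is what collapses the sum over all $N$ examples down to a sum over the $N^{\rm in}_x$ sequences containing $x$, and it is where the counts $N^{\rm in}_x$, $N_{x,\nu}$ and the ratios $r^{\rm in}_x$, $r_{x,\nu}$ come from once one writes $\frac{1}{N}\sum_{i : x \in \vX^i} = \frac{N^{\rm in}_x}{N}\cdot\frac{1}{N^{\rm in}_x}\sum_{i : x \in \vX^i} = \frac{r^{\rm in}_x}{N^{\rm in}_x}\sum_{i}$ (and similarly splitting by label $\nu$).

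The key step after that is to handle the Jacobian of $G(\vW^E_{\vX^i})$ with respect to $\vW^E_x$. Here I would use the notation $G^{(1)}$ introduced in the statement as the derivative of $G$ with respect to the relevant embedding column, and the assumption implicit in the Hadamard-product form of the answer that this derivative acts diagonally (componentwise) — i.e. the $j$-th coordinate of the input embedding influences primarily the $j$-th coordinate channel, so the Jacobian contribution reduces to an elementwise multiplication. Plugging in, the per-example contribution becomes $G^{(1)}(\vW^E_{\vX^i}) \odot \big(\vW^{U,T}(\vp^i - \ve_{y^i})\big)$. Then I would split the $\ve_{y^i}$ term: summing $\vW^{U,T}\ve_{y^i}$ over sequences containing $x$ and grouping by the label value $\nu = y^i$ turns it into $\sum_{\nu \in \fV} (\vW^{U,T}\ve_\nu)$ times the (inner) sum of $G^{(1)}$ over the $N_{x,\nu}$ sequences containing $x$ with label $\nu$ — this is the first (positive) term, with the sign flip coming from the overall $-\nabla\mathcal{L}$ in gradient flow. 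The $\vp^i$ term stays grouped only by "contains $x$" and yields the second (negative) term $\frac{r^{\rm in}_x}{N^{\rm in}_x}\sum_{i=1}^{N^{\rm in}_x} G^{(1)}(\vW^E_{\vX^i_x}) \odot (\vW^{U,T}\vp^i_x)$.

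The main obstacle I anticipate is making the middle step fully rigorous: precisely stating what $G^{(1)}$ means and why the chain rule through $G$ produces an honest Hadamard product rather than a general matrix-vector product. This hinges on the structural assumption about $G$ (for the linear model $G$ is the identity and the claim is transparent; for the feedforward network one needs the specific way the first layer consumes the embedding, so that the derivative with respect to $\vW^E_x$ is naturally indexed by the hidden coordinate). I would therefore pin down that assumption early, verify it on the concrete architectures used in the paper, and only then carry out the (otherwise routine) index-chasing. A secondary bookkeeping subtlety is correctly handling sequences in which $x$ appears more than once, or appears as its own label; I would either note that in the controlled experiments this does not occur, or absorb the multiplicity into the definition of the inner sums so the displayed formula still reads cleanly.
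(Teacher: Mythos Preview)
Your proposal is correct and follows essentially the same route as the paper: compute $\partial\ell^i/\partial F(\vX^i)=\vp^i-\ve_{y^i}$, push through $\vW^{U,T}$ and the chain rule for $G$ to get the elementwise product with $G^{(1)}$, drop sequences not containing $x$, and then split the $\ve_{y^i}$ contribution by label $\nu$ while keeping the $\vp^i$ contribution grouped only by ``contains $x$''. If anything, you are slightly more careful than the paper in flagging that the Hadamard form relies on a structural assumption about how $G$ consumes the embedding column (the paper simply defines $G^{(1)}$ to make this true).
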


\begin{prop}\label{prop:dynamics_unemb}
    Given an embedding-based model $F$ with an unembedding matrix $\vW^{U}$. For any token $\nu\in\fV$, the gradient flow of $\vW^{U}_{\nu}$ (the $\nu$-th row of $\vW^U$) can be written as
    \begin{equation*}
        \frac{d \vW^{U}_{\nu}}{dt} = \frac{r^{\rm out}_{\nu}}{N^{\rm out}_{\nu}}\sum_{i=1}^{N^{\rm out}_{\nu}}\left[G\left (\vW^{E}_{\vX_{\left(\cdot,\nu\right)}^i}\right)\right]^T-\frac{1}{N}\sum_{i=1}^N\vp^{i,\nu} \left[G\left (\vW^{E}_{\vX^i}\right)\right]^T,
    \end{equation*} 
    where $N^{\rm out}_{\nu}$ denotes the count of sequences with label $\nu$ and $r^{\rm out}_{\nu_j}=\frac{N^{\rm out}_{\nu}}{N}$. $\vX_{\left(\cdot,\nu\right)}^i$ means the $i$-th sample which takes $\nu$ as the label. $\vp^{i,\nu}$ means the $\nu$-th element of $\vp^i$. 
\end{prop}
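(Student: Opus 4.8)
The plan is to realize the gradient flow as $\frac{d\vW^U_\nu}{dt}=-\nabla_{\vW^U_\nu}L$ for the averaged cross-entropy loss $L=\frac1N\sum_{i=1}^N\ell^i$, and to evaluate $\nabla_{\vW^U_\nu}\ell^i$ by the chain rule through the (linear) unembedding layer. First I would record the standard softmax cross-entropy identity at the logit level: writing $\vz^i:=F(\vX^i)\in\sR^{d_{\rm vob}}$, so that $\vp^i={\rm Softmax}(\vz^i)$, the expression $\ell^i=-z^i_{y^i}+\log\sum_{j=1}^{d_{\rm vob}}\exp z^i_j$ differentiates to $\partial\ell^i/\partial z^i_j=\vp^{i,j}-\delta_{j,y^i}$, where $\delta$ denotes the Kronecker symbol.

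Next, since the hidden map $G$ acts only on $\vW^E_{\vX^i}$ and carries no dependence on $\vW^U$, the dependence of $\vz^i$ on the row $\vW^U_\nu$ is purely linear, $z^i_j=\vW^U_j\,G(\vW^E_{\vX^i})$, whence $\partial z^i_j/\partial\vW^U_\nu=\delta_{j\nu}\,[G(\vW^E_{\vX^i})]^T$. Composing this with the first step gives $\nabla_{\vW^U_\nu}\ell^i=(\vp^{i,\nu}-\delta_{\nu,y^i})\,[G(\vW^E_{\vX^i})]^T$. Averaging over the dataset and separating the two contributions yields $\nabla_{\vW^U_\nu}L=\frac1N\sum_{i=1}^N\vp^{i,\nu}\,[G(\vW^E_{\vX^i})]^T-\frac1N\sum_{i:\,y^i=\nu}[G(\vW^E_{\vX^i})]^T$. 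The remaining bookkeeping step is to notice that the second sum ranges over exactly the $N^{\rm out}_\nu$ samples whose label is $\nu$; re-indexing those samples as $\vX^i_{(\cdot,\nu)}$ and extracting the factor $N^{\rm out}_\nu/N=r^{\rm out}_\nu$ rewrites it as $\frac{r^{\rm out}_\nu}{N^{\rm out}_\nu}\sum_{i=1}^{N^{\rm out}_\nu}[G(\vW^E_{\vX^i_{(\cdot,\nu)}})]^T$. Negating to pass from gradient to gradient flow produces exactly the stated formula. Structurally this is the mirror image of the argument behind Proposition~\ref{prop:dynamics_emb}, with occurrences carrying the label $\nu$ playing the role that occurrences containing the token $x$ played there.

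I do not anticipate a genuine obstacle: once the two ingredients above are in hand the computation is elementary. The only points demanding care are of a bookkeeping nature — keeping in mind that $\vW^U_\nu$ is a \emph{row} of $\vW^U$, so that its derivative carries the transpose $[G(\cdot)]^T$; exploiting that $G$ is $\vW^U$-independent so that no further terms enter the chain rule; and recombining the per-sample sum over $\{i:y^i=\nu\}$ into the output-count normalization $r^{\rm out}_\nu/N^{\rm out}_\nu$. If one instead works with a per-position prediction model, in which $G(\vW^E_{\vX^i})$ is a sequence of hidden states rather than a single vector, the same identity would simply be summed over positions and over occurrences of $\nu$; in the present formulation $F(\vX^i)$ is a single logit vector, so no such extra summation is required.
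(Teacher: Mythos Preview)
Your proposal is correct and follows essentially the same approach as the paper: compute $\partial\ell^i/\partial\vW^U_\nu$ via the softmax-logit identity $\partial\ell^i/\partial z^i_j=\vp^{i,j}-\delta_{j,y^i}$ and the chain rule through the linear unembedding layer, then average over samples and re-index the $\{i:y^i=\nu\}$ sum using $r^{\rm out}_\nu=N^{\rm out}_\nu/N$. The paper's own proof is a two-line version of exactly this computation.
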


In this work, we employ three embedding-based architectures:
\begin{itemize}
    \item Linear model. $F_{\rm lin}\left(\vX\right) = \vW^{U}\sum_{x\in \vX}\vW^{E}_{x}$.
    \item Feedforward network. $F_{\rm ffn}\left(\vX\right) = \vW^{U}\sigma\left(\sum_{x\in \vX}\vW^{E}_{x}\right)$, where $\sigma$ denotes the element-wise nonlinear activation.
    \item Transformer-based architecture. We employ the Qwen2.5 architecture in Section~\ref{sec:language_model} and the Llama 2 architecture~\citep{llama2} in Appendix~\ref{app:llama}.
\end{itemize}

\section{Probability Signature}\label{sec:distribution}
In the field of deep learning, the data characteristics play a critically important role in both the training dynamics and the final performance of the model. Given a training dataset $\left\{\left(\vX^i, y^i\right)\right\}_{i=1}^N$ sampled from distribution $\pi$, we define the following \emph{probability signatures}:
\begin{equation*}\label{eq:distribution}
\begin{aligned}
    &\vphi_x^{y} = \sum_{\nu\in \fV} \mathbb{P}_{\pi}\left(y=\nu\mid x\in \vX\right)\ve_{\nu},\quad \vphi_x^{\vX} = \sum_{x^{\prime}\in \fV}\mathbb{P}_{\pi}\left(x^{\prime}\in\vX\mid x\in\vX\right)\ve_{x^{\prime}},\\
    &\vphi_x^{\vX\mid y} = \sum_{\nu\in\fV}\ve_{\nu}\left(\sum_{x^{\prime}\in\fV}\mathbb{P}\left(x^{\prime}\in\vX\mid x\in\vX, y=\nu\right)\ve_{x^{\prime}}\right)^T,\quad \vvarphi_{\nu}^{\vX}=\sum_{x\in\fV}\mathbb{P}_{\pi}\left(x\in\vX\mid y=\nu\right)\ve_{x}.
\end{aligned}
\end{equation*}

$\vphi_x^{y}$ denotes the distribution of the label given that the input sequence $\vX$ contains element $x$. It captures the association between a specific input element $x$ and its label.
$\vphi_x^{\vX}$ represents the probability that, given $\vX$ contains $x$, it also contains another input element $x^{\prime}$. It characterizes the co-occurrence relationship between different elements.
$\vphi_x^{\vX\mid y}$ indicates the probability that, when $\vX$ contains $x$ and the label is fixed, the sequence additionally includes $x^{\prime}$. It further delineates the relationship among different elements in the sequence under a specified label condition.
$\vvarphi_{\nu}^{\vX}$ denotes the probability distribution of the element $x$ contained in $\vX$ conditional with the label being $\nu$. It reflects the dependency between input elements and the label from a reverse perspective.

\section{Addition Task}
The addition task has become an important benchmark for studying the characteristics of language models. Many studies have found that the digits' embeddings exhibit an ordered structure consistent with the natural sequence of numbers~\citep{zhang2024initial,yao2025an}. To demonstrate the significant influence of probability signatures on the model's embedding space, we design three types of composite addition tasks to perform \emph{variable-controlled experiments}. Assuming all tokens belong to positive integers, and we denote an anchor set by $\fA$, whose elements represent different addition operations, i.e., anchor $\alpha_1$ means addition with $\alpha_1$. Given a input sequence $\vX=\left[z,\alpha_1,\alpha_2\right]$, which means the composite function $\left(\alpha_1,\alpha_2\right)$ on the key $z$, we define the following tasks:  
\begin{itemize}
    \item Addition task. $f_{\rm add}\left(\vX\right)=z+\alpha_1+\alpha_2,\quad \alpha_1,\alpha_2\in \fA$. For each anchor pair $\left(\alpha_1,\alpha_2\right)$, $z$ is sampled from the same set $\fZ$ with $\fZ\cap\fA=\emptyset$. In $f_{\rm add}$, $\vphi_{\alpha}^{ y}$ are distinct with varying anchor $\alpha$ while $\vphi_{\alpha}^{\vX}$ are identical across anchors.
    \item Addition task with the same value domain. $\tilde{f}_{\rm add}\left(\vX\right)=z+\alpha_1+\alpha_2,\quad \alpha_1,\alpha_2\in \fA$. For anchor pair $\left(\alpha_1,\alpha_2\right)$, $z\in \fZ_{\left(\alpha_1,\alpha_2\right)}=\fY-\alpha_1-\alpha_2$ where $\fY$ denotes the label domain, which is identical for all anchor pairs. In $\tilde{f}_{\rm add}$, $\vphi_{\alpha}^{\vX}$ are distinct with varying anchor $\alpha$ while $\vphi_{\alpha}^{y}$ are identical for all $\alpha\in\fA$.
    \item Module addition. $f_{\rm mod}\left(\vX\right)=\min \fZ + \left(z+\alpha_1+\alpha_2\mod \mid\fZ\mid\right),\quad \alpha_1,\alpha_2\in \fA$ and $z\in\fZ$. Both $\vphi_{\alpha}^{\vX}$ and $\vphi_{\alpha}^{y}$ are identical with different anchors, while $\vphi_\alpha^{\vX\mid y}$ are distinct.
\end{itemize}

In this work, we set $\mathcal{A} = \left\{11, 12, \cdots, 20\right\}$ and $\fY=\mathcal{Z} = \left\{101, 102, \cdots, 140\right\}$. Figure~\ref{fig:addition_acc} A displays the probability signature, which is distinct across the $\alpha$ in each task, revealing that the difference among $\alpha$ lies in the global horizontal shift. The detailed formulations are provided in Appendix~\ref{app:addition_distribution}. 
\begin{figure}[htpb]
    \centering
    \includegraphics[width=1\linewidth]{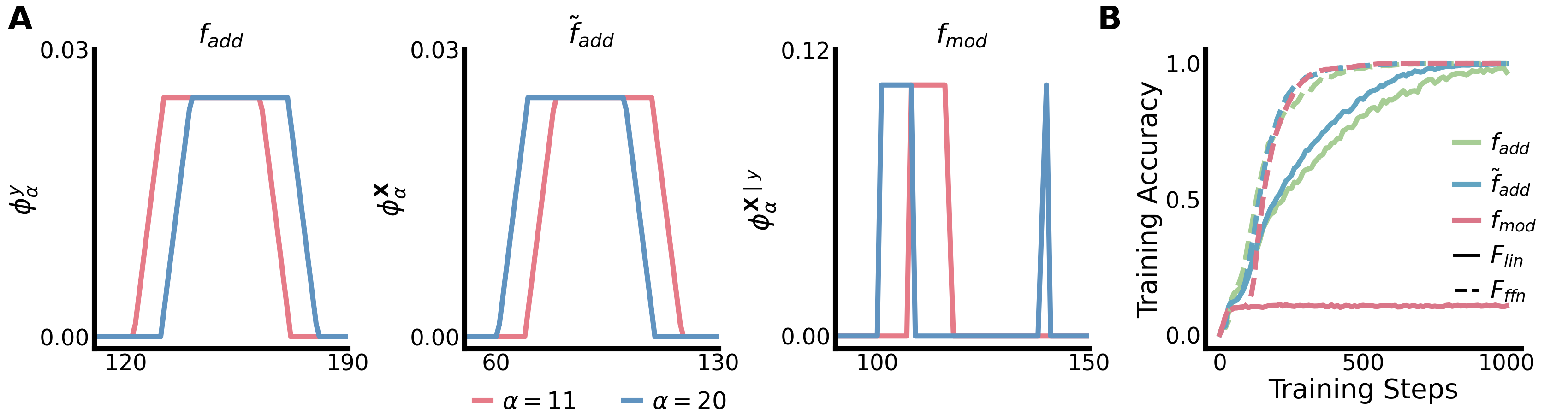}
    \caption{A: Probability signature which is distinct with varying $\alpha$ in each task ($f_{\rm add}\to \vphi_\alpha^y,\tilde{f}_{\rm add}\to\vphi_{\alpha}^{\vX},f_{\rm mod}\to \vphi_{\alpha}^{\vX\mid y}$), $\alpha=11$ (red) and 20 (blue). In $\vphi_{\alpha}^{\vX\mid y}$, it's displayed with $y=160$. B: Training accuracy of the $F_{\rm lin}$ and $F_{\rm ffn}$ on the three addition tasks.}
    \label{fig:addition_acc}
\end{figure}

For notation convenience, we denote that $\vW^E_{\fA}=\left[\vW^{E}_{\alpha}\right]_{\alpha\in\fA}, \vphi_{\fA}=\left[\vphi_{\alpha}\right]_{\alpha\in\fA}$ for $\vphi_{\alpha}=\vphi^{y}_{\alpha},\vphi^{\vX}_{\alpha},\vphi^{\vX\mid y}_{\alpha}$, and $\cos\left(\vW^E_{\fA}\right):=\left[\cos\left(\vW^E_{\alpha},\vW^E_{\alpha^\prime}\right)\right]_{\alpha,\alpha^{\prime}\in\fA},\cos\left(\vphi_{\fA}\right):=\left[\cos\left(\vphi_{\alpha},\vphi_{\alpha^\prime}\right)\right]_{\alpha,\alpha^{\prime}\in\fA}$. Similarly, $\cos\left(\vW^U_{\fV}\right):=\left[\cos\left(\vW^U_{\nu},\vW^U_{\nu^\prime}\right)\right]_{\nu,\nu^{\prime}\in\fV}$ and  $\cos\left(\vvarphi_{\fV}^{\vX}\right):=\left[\cos\left(\vvarphi_{\nu},\vvarphi_{\nu^\prime}\right)\right]_{\nu,\nu^{\prime}\in\fV}$.
\subsection{Results}
We train these addition tasks using the $F_{\mathrm{lin}}$ and $F_{\mathrm{ffn}}$ with $d=200$. Inspired by the work of~\cite{luo2021phase,xu2025overview}, we initialize the model parameters by $\vW_{i,j}\sim\fN\left(0,d^{-0.8}\right)$, indicating a small initialization scale. The complete training configurations are provided in Appendix~\ref{app:exp_setup}.
Figure~\ref{fig:addition_acc} B shows the training accuracy of $F_{\mathrm{lin}}$ and $F_{\mathrm{ffn}}$ on the three addition tasks. The results reveal that both $f_{\mathrm{add}}$ and $\tilde{f}_{\mathrm{add}}$ are learned well by the linear model, whereas $f_{\mathrm{mod}}$ requires the nonlinear model to achieve an effective fit.

\subsection{Embedding Matrix}
In the addition tasks, the anchors exhibit a strict ordering due to the numerical sequence. This provides an ideal setting for the embedding space to develop a corresponding ordered relationship. To formally quantify the formation of the ordered structure, we define the following metric:
\begin{equation*} \label{eq:relation_order}
    R_{\rm order}\left(\vW^{E}_{\fA}\right)={\rm Corr}\left(\cos\left(\vW^E_\fA\right),\left\{\mid \alpha-\alpha^{\prime}\mid\right\}_{\alpha,\alpha^{\prime}\in\fA}\right).
\end{equation*}
 $R_{\rm order}\left(\vW^{E}_{\fA}\right)$ reflects the relationship between embedding similarity and anchor difference. A strong negative $R_{\rm order}\left(\vW^{E}_{\fA}\right)$ (approximately $-1$) indicates that the similarity decreases systematically with increasing anchor difference, confirming the presence of a hierarchical organization in the anchor embeddings. Figure~\ref{fig:addition_embedidng} A represents the distribution of $\cos\left(\vW^E_{\fA}\right)$ for the three tasks with $F_{\rm lin}$ and $F_{\rm ffn}$, respectively, and Figure~\ref{fig:addition_embedidng} B depicts the corresponding evolution of $R_{\rm order}\left(\vW^{E}_{\fA}\right)$. In the case of $f_{\rm add}$, anchor embeddings quickly form an ordered structure, where the cosine similarity gets smaller as the anchor distance gets larger. 
 For the task $\tilde{f}_{\rm add}$, the anchor embeddings also develop a similar hierarchical structure. However, its construction requires more steps, indicating that the driving factors of the structure in $f_{\rm add}$ and $\tilde{f}_{\rm add}$ are different. In $f_{\rm mod}$, although the linear model fails to learn it effectively, the anchor embeddings still undergo noticeable changes from the initial stage. Specifically, all embedding vectors become aligned in nearly the same direction. Furthermore, the anchor embeddings of $f_{\rm mod}$ in $F_{\rm ffn}$ construct an ordered structure with more steps, suggesting that the activation provides another factor in deriving such an embedding structure. 

\begin{figure}[htpb]
    \centering
    \includegraphics[width=1\linewidth]{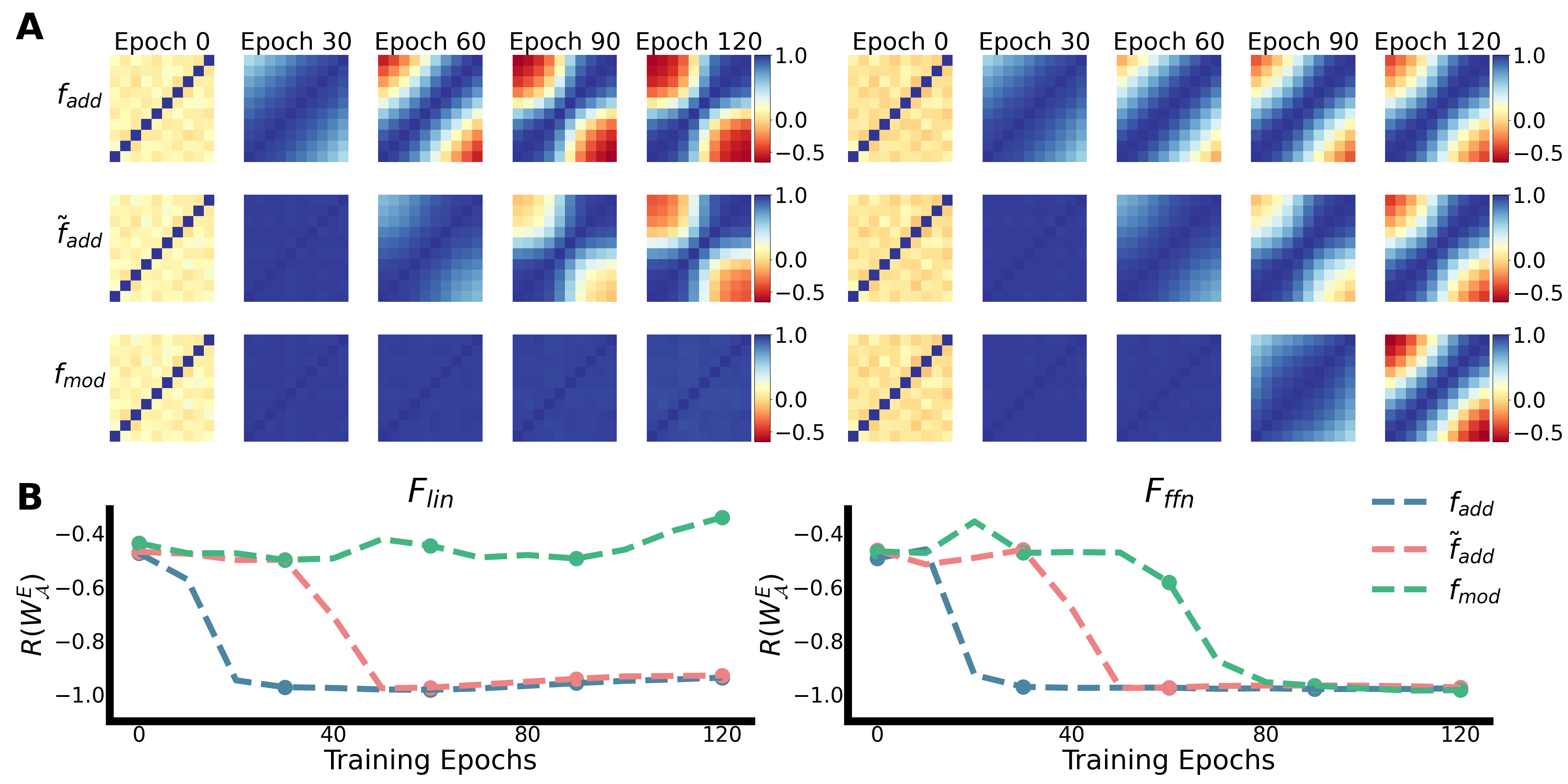}
    \caption{A: The heatmap of $\cos\left(\vW^E_{\fA}\right)$ in $F_{\rm lin}$ (left)  and $F_{\rm ffn}$ (right) during the training process. Each row corresponds to $f_{\rm add}$, $\tilde{f}_{\rm add}$, and $f_{\rm mod}$, respectively. B: Dynamics of $R_{\rm order}\left(\vW^{E}_{\fA}\right)$ in $F_{\rm lin}$ (left) and $F_{\rm ffn}$ (right). Line colors represent task types. }
    \label{fig:addition_embedidng}
\end{figure}

We derive the reasons for the embedding structure in each task from the gradient flow. With Proposition~\ref{prop:dynamics_emb}, we obtain the following approximation: 
\begin{corollary} [Embedding of Linear Model]\label{cor:emb_linear}
    Let $N\rightarrow\infty$, $\pi$ denotes the data distribution over the training set. The gradient flow of $\vW^E_\alpha$ in $F_{\rm lin}$ can be approximated by
    \begin{equation}\label{eq:linear_emb}
    \frac{d\vW^{E}_{\alpha}}{dt}=\vW^{{U},T} r_x^{\rm in}\left(\vphi_\alpha^{y}-\frac{1}{d_{\rm vob}}\vW^{U}\vW^{E}\vphi^{\vX}_{\alpha}+\veta\right),
    \end{equation}
where $\veta$ denotes the data-independent and higher-order terms.
\end{corollary}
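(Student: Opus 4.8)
\emph{Proof plan.} I will substitute the linear architecture into the general gradient-flow identity of Proposition~\ref{prop:dynamics_emb}, simplify using the structure of $F_{\rm lin}$, rewrite the empirical counts as the population probabilities defining the probability signatures of Section~\ref{sec:distribution}, and finally linearize the softmax in the small-initialization regime. For the first step, note that for $F_{\rm lin}$ we have $G(\vW^{E}_{\vX})=\sum_{x'\in\vX}\vW^{E}_{x'}$, so the derivative $G^{(1)}$ of $G$ with respect to the column $\vW^{E}_{\alpha}$ is the all-ones vector $\mathbf{1}\in\sR^{d}$ (assuming $\alpha$ occurs once per sequence; a repeated occurrence contributes only a bounded multiplicative constant, absorbed into $\veta$). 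Hence every Hadamard product $\mathbf{1}\odot(\cdot)$ in Proposition~\ref{prop:dynamics_emb} is trivial and the inner sums over the $N_{\alpha,\nu}$ identical copies simply yield the factor $N_{\alpha,\nu}$; applying Proposition~\ref{prop:dynamics_emb} with $x=\alpha$ gives
\begin{equation*}
\frac{d\vW^{E}_{\alpha}}{dt}=\vW^{{U},T}\Bigl(\sum_{\nu\in\fV}r_{\alpha,\nu}\ve_{\nu}\Bigr)-r^{\rm in}_{\alpha}\,\vW^{{U},T}\Bigl(\frac{1}{N^{\rm in}_{\alpha}}\sum_{i=1}^{N^{\rm in}_{\alpha}}\vp^{i}_{\alpha}\Bigr).
\end{equation*}

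Next I identify the two bracketed quantities with objects from Section~\ref{sec:distribution}. Since $r_{\alpha,\nu}=N_{\alpha,\nu}/N=(N_{\alpha,\nu}/N^{\rm in}_{\alpha})(N^{\rm in}_{\alpha}/N)\to\mathbb{P}_{\pi}(y=\nu\mid\alpha\in\vX)\,r^{\rm in}_{\alpha}$ as $N\to\infty$, the first bracket becomes $r^{\rm in}_{\alpha}\vphi^{y}_{\alpha}$. For the second bracket, write the logit vector of $F_{\rm lin}$ on a sequence $\vX$ as $\vW^{U}\vW^{E}\vs_{\vX}$ with $\vs_{\vX}:=\sum_{x'\in\vX}\ve_{x'}$ the bag-of-tokens vector; averaging $\vs_{\vX}$ over the sequences that contain $\alpha$ gives $\sum_{x'\in\fV}\mathbb{P}_{\pi}(x'\in\vX\mid\alpha\in\vX)\,\ve_{x'}=\vphi^{\vX}_{\alpha}$. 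Thus, modulo the softmax nonlinearity, $\tfrac{1}{N^{\rm in}_{\alpha}}\sum_{i}\vp^{i}_{\alpha}\to\mathbb{E}_{\pi}\!\left[\,{\rm Softmax}(\vW^{U}\vW^{E}\vs_{\vX})\mid\alpha\in\vX\,\right]$.

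It remains to linearize the softmax. Under the initialization $\vW_{i,j}\sim\fN(0,d^{-0.8})$ the logits stay small during the early phase, so a first-order expansion at the origin gives ${\rm Softmax}(\vz)=\tfrac{1}{d_{\rm vob}}\mathbf{1}+\tfrac{1}{d_{\rm vob}}\bigl(I-\tfrac{1}{d_{\rm vob}}\mathbf{1}\mathbf{1}^{T}\bigr)\vz+O(\|\vz\|^{2})$. Taking $\vz=\vW^{U}\vW^{E}\vs_{\vX}$, pushing the conditional expectation through the linear part, and using the bag-of-tokens identity yields $\tfrac{1}{N^{\rm in}_{\alpha}}\sum_{i}\vp^{i}_{\alpha}\approx\tfrac{1}{d_{\rm vob}}\mathbf{1}+\tfrac{1}{d_{\rm vob}}\vW^{U}\vW^{E}\vphi^{\vX}_{\alpha}-\tfrac{1}{d_{\rm vob}^{2}}\mathbf{1}\mathbf{1}^{T}\vW^{U}\vW^{E}\vphi^{\vX}_{\alpha}$. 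Substituting this and the first bracket into the display above, factoring out $r^{\rm in}_{\alpha}$ and $\vW^{U,T}$, and collecting the data-independent baseline $-\tfrac{1}{d_{\rm vob}}\mathbf{1}$, the mean-removal correction $\tfrac{1}{d_{\rm vob}^{2}}\mathbf{1}\mathbf{1}^{T}\vW^{U}\vW^{E}\vphi^{\vX}_{\alpha}$, and the quadratic-and-higher softmax remainder into the single term $\veta$ produces exactly \eqref{eq:linear_emb}.

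The main obstacle is controlling $\veta$: one has to specify the regime — the time window and the scalings of $d$, $d_{\rm vob}$ and $N$ — in which the quadratic softmax remainder and the $\mathbf{1}\mathbf{1}^{T}$ correction are genuinely dominated by the retained terms $\vphi^{y}_{\alpha}$ and $\tfrac{1}{d_{\rm vob}}\vW^{U}\vW^{E}\vphi^{\vX}_{\alpha}$, and at the same time make the $N\to\infty$ replacement of empirical frequencies by $\pi$-probabilities uniform over that window. The remaining ingredients — the vanishing of the Hadamard products, the bookkeeping identity $r_{\alpha,\nu}=r^{\rm in}_{\alpha}\,\mathbb{P}_{\pi}(y=\nu\mid\alpha\in\vX)$, and the bag-of-tokens expectation $\mathbb{E}_{\pi}[\vs_{\vX}\mid\alpha\in\vX]=\vphi^{\vX}_{\alpha}$ — are routine once this is in place.
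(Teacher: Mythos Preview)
Your proposal is correct and follows essentially the same route as the paper's own proof: specialize Proposition~\ref{prop:dynamics_emb} to $F_{\rm lin}$ so that $G^{(1)}=\mathbf{1}$ trivializes the Hadamard products, factor out $r^{\rm in}_{\alpha}$, identify the empirical label frequencies with $\vphi^{y}_{\alpha}$ and the averaged bag-of-tokens with $\vphi^{\vX}_{\alpha}$ in the $N\to\infty$ limit, and then first-order expand the softmax around the uniform distribution, absorbing the constant $\tfrac{1}{d_{\rm vob}}\mathbf{1}$ and the higher-order remainder into $\veta$. Your version is marginally more explicit in keeping the centering projection $(I-\tfrac{1}{d_{\rm vob}}\mathbf{1}\mathbf{1}^{T})$ before pushing it into $\veta$, and your bag-of-tokens vector $\vs_{\vX}$ is a tidy bookkeeping device, but the argument is the same.
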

\begin{remark}
Deep learning methods fundamentally comprise three essential components: the model, the data, and the optimization algorithm. Corollary~\ref{cor:emb_linear} clearly illustrates how these three elements are coupled (model: $F_{\rm lin}$; data: probability signatures; optimization algorithm: the gradient flow) and influence the embedding space, providing crucial insights for understanding and investigating their joint impact on the performance of deep learning approaches. 
\end{remark}

Corollary~\ref{cor:emb_linear} indicates that the dynamics of $\vW^E_{\alpha}$ in $F_{\rm lin}$ are primarily impacted by the probability signatures $\vphi_\alpha^y$ and $\vphi_\alpha^{\vX}$, demonstrating the connection between data distribution and the embedding space. As we mentioned, the $\vphi_\alpha^{y}$ is distinct for different $\alpha$, while the $\vphi_{\alpha}^{\vX}$ is identical for all $\alpha$ in $f_{\rm add}$; the opposite holds for $\tilde{f}_{\rm add}$. Figure~\ref{fig:cosine_esitimation_addition} A depicts $\cos\left(\vphi_{\fA}^y\right)$ (left) and $\cos\left(\vphi_{\fA}^{\vX}\right)$ (middle column) in $f_{\rm add}$ (top) and $\tilde{f}_{\rm add}$ (middle row), revealing that $\vphi_{\alpha}^{y}$ and $\vphi_{\alpha}^{\vX}$ are significant for the formation of the hierarchy embedding structure in $f_{\rm add}$ and $\tilde{f}_{\rm add}$, respectively. Furthermore, \eqref{eq:linear_emb} indicates that $\vphi_{\alpha}^y$ acts as a leading term and the effect of $\vphi_\alpha^{\vX}$ is weaker in the early training process since it times a small magnitude term $\frac{1}{d_{\rm vob}}\vW^{U}\vW^{E}$. This results in the formation speed of the structure in $\tilde{f}_{\rm add}$ being slower than $f_{\rm add}$, which is consistent with the phenomenon in Figure~\ref{fig:addition_embedidng}.

\begin{figure}[ht]
    \centering
        \includegraphics[width=1\linewidth]{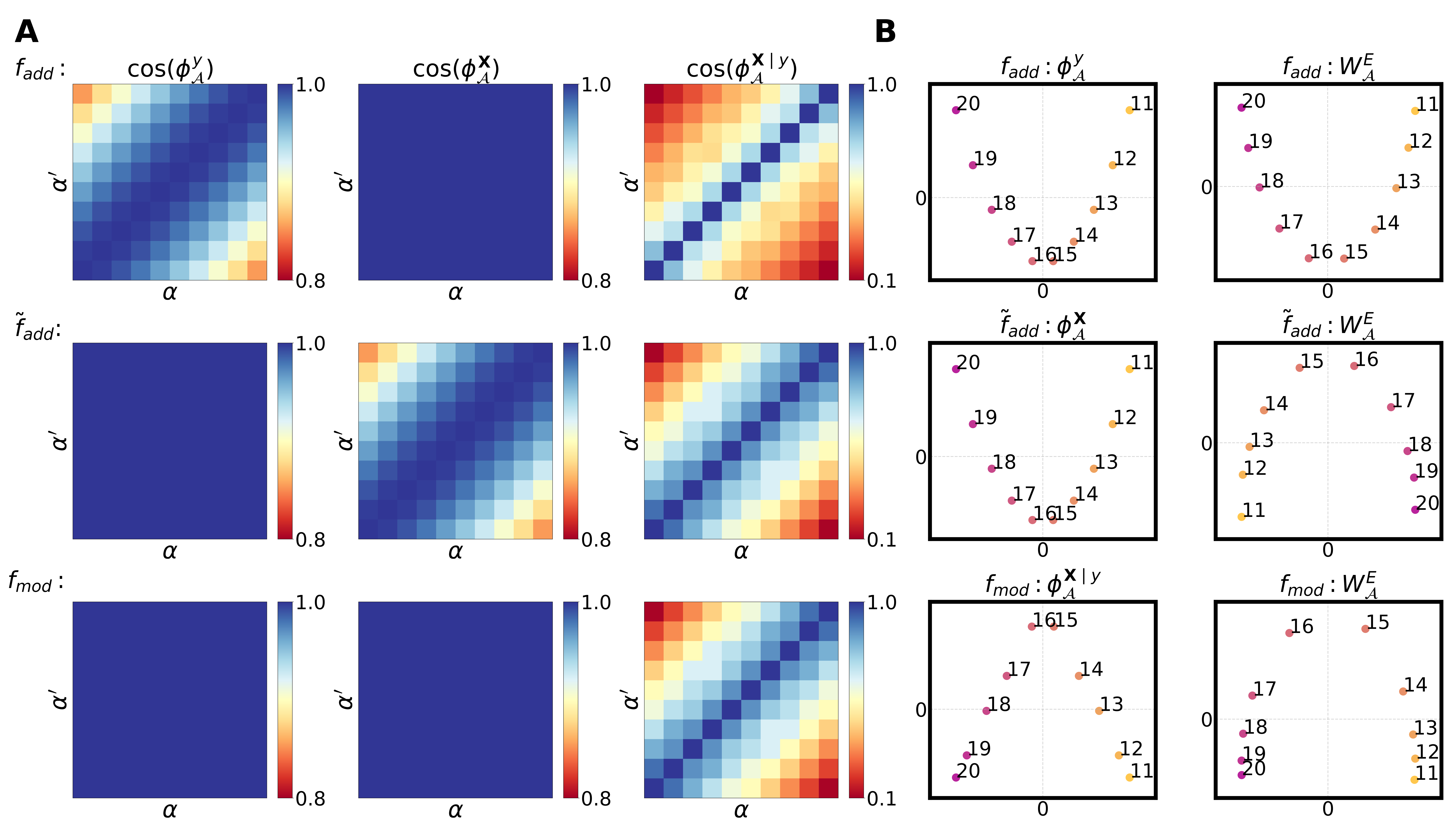}
    
    \caption{A: Cosine similarity among different anchor $\alpha$ of $\vphi_{\alpha}^{y},\vphi_{\alpha}^{\vX},\vphi_{\alpha}^{\vX\mid y}$ (see \eqref{eq:distribution}) in each task. B: The PCA projection of the key factors ($f_{\rm add}\to \vphi_\alpha^y,\tilde{f}_{\rm add}\to\vphi_{\alpha}^{\vX},f_{\rm mod}\to \vphi_{\alpha}^{\vX\mid y}$) and the embedding vectors in different tasks ($F_{\rm ffn}$, 120 epoch). }
    \label{fig:cosine_esitimation_addition}
\end{figure}
In task $f_{\rm mod}$, $\vphi_{\alpha}^{y}$ and $\vphi_{\alpha}^{\vX}$ are both identical across different anchors $\alpha$. Figure~\ref{fig:cosine_esitimation_addition} A (bottom) indicates that $\cos\left(\vphi_{\fA}^{y}\right)$ and $\cos\left(\vphi_{\fA}^{\vX}\right)$ are $1$ for all anchor pairs, which leads these embedding vectors to converge to almost the same direction, consistent with the observation in $F_{\rm lin}$. To identify the key factors that contribute to the formation of the ordered embedding structure for $f_{\rm mod}$ in $F_{\rm ffn}$, we perform a similar analysis of its gradient flow and obtain the following result.
\begin{corollary}[Embedding of FFN]\label{cor:emb_nonlinear}
        Let $N\rightarrow\infty$, $\pi$ denotes the data distribution over the training set. The gradient flow of $\vW^E_\alpha$ in $F_{\rm ffn}$ could be approximated by
        \begin{equation}\label{eq:nonlinear_emb}
        \frac{d\vW^{E}_{\alpha}}{dt} =\mathbb{T}\cdot\left(\vphi_{\alpha}^{\vX\mid y}\right)^T  + \veta_{\vphi_{\alpha}^{y}}+ \frac{1}{d_{\rm vob}}\veta_{\vphi_{\alpha}^{\vX}} + \tilde{\veta},
        \end{equation}
where $\mathbb{T}\in\sR^{d\times d_{\rm vob}\times d_{\rm vob}}$, $\mathbb{T}_{:,:,\nu}=r_{\alpha,\nu}{\rm diag}\left(\vW^{U}_\nu\right)\vW^{E}$ for $\nu\in\fV$ and 0 otherwise. $\veta_{\vphi_{\alpha}^{y}}$ and $\veta_{\vphi_{\alpha}^{\vX}}$ denotes the term related with $\vphi_{\alpha}^{y}$ and $\vphi_{\alpha}^{\vX}$, respectively. $\tilde{\veta}$ represents the higher-order term.
\end{corollary}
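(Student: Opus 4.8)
\textbf{Proof proposal for Corollary~\ref{cor:emb_nonlinear}.}

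The plan is to specialize Proposition~\ref{prop:dynamics_emb} to $F_{\rm ffn}(\vX) = \vW^U \sigma\!\left(\sum_{x\in\vX}\vW^E_x\right)$ and then take the $N\to\infty$ limit while tracking which terms carry the probability signatures. First I would compute $G^{(1)}$ for the FFN: since $G(\vW^E_{\vX}) = \sigma\!\left(\sum_{x\in\vX}\vW^E_x\right)$, the derivative with respect to $\vW^E_\alpha$ (when $\alpha\in\vX$) is $\mathrm{diag}\!\left(\sigma'\!\left(\sum_{x\in\vX}\vW^E_x\right)\right)$. Substituting into the first sum of Proposition~\ref{prop:dynamics_emb} gives, for each label $\nu$, a term $\frac{r_{\alpha,\nu}}{N_{\alpha,\nu}}\left(\vW^{U,T}\ve_\nu\right)\odot\sum_i \sigma'\!\left(\sum_{x\in\vX^i_{(\alpha,\nu)}}\vW^E_x\right)$. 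The key move is a first-order (early-training / small-initialization) expansion of $\sigma'$ around $0$: writing $\sigma'(s)\approx \sigma'(0) + \sigma''(0)\,s$, the constant piece $\sigma'(0)$ produces a contribution that depends on $\nu$ only through $r_{\alpha,\nu}$ and $\vW^U_\nu$ — i.e. through $\vphi_\alpha^y$ after averaging — which I would fold into $\veta_{\vphi_\alpha^y}$, while the linear piece $\sigma''(0)\sum_{x\in\vX}\vW^E_x$ is where the co-occurrence structure enters. Under $N\to\infty$, $\frac{1}{N_{\alpha,\nu}}\sum_i \sum_{x\in\vX^i_{(\alpha,\nu)}}\vW^E_x \to \vW^E \sum_{x'}\mathbb{P}(x'\in\vX\mid \alpha\in\vX, y=\nu)\ve_{x'}$, which is exactly $\vW^E$ applied to the $\nu$-slice of $\vphi_\alpha^{\vX\mid y}$. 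Collecting the factor $r_{\alpha,\nu}$, the $\mathrm{diag}(\vW^U_\nu)$ from the Hadamard product, and this $\vW^E$, and summing over $\nu$, yields precisely the tensor contraction $\mathbb{T}\cdot\left(\vphi_\alpha^{\vX\mid y}\right)^T$ with $\mathbb{T}_{:,:,\nu} = r_{\alpha,\nu}\,\mathrm{diag}(\vW^U_\nu)\vW^E$.

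Next I would handle the second term of Proposition~\ref{prop:dynamics_emb}, $-\frac{r^{\rm in}_\alpha}{N^{\rm in}_\alpha}\sum_i \sigma'\!\left(\sum_{x\in\vX^i_\alpha}\vW^E_x\right)\odot\left(\vW^{U,T}\vp^i_\alpha\right)$. The same expansion of $\sigma'$ applies; the leading $\sigma'(0)$ piece combined with the $N\to\infty$ average of $\vp^i_\alpha$ gives a term governed by the marginal label statistics and the overall softmax output, which contributes to $\veta_{\vphi_\alpha^y}$ (its $\alpha$-dependence, if any beyond higher order, enters through $\vphi_\alpha^y$), and the $\sigma''(0)$-linear piece, after averaging $\frac{1}{N^{\rm in}_\alpha}\sum_i \sum_{x\in\vX^i_\alpha}\vW^E_x \to \vW^E\vphi_\alpha^{\vX}$, produces a term proportional to $\vW^E\vphi_\alpha^{\vX}$ — this is the $\frac{1}{d_{\rm vob}}\veta_{\vphi_\alpha^{\vX}}$ contribution, where the $\frac{1}{d_{\rm vob}}$ scaling comes from the near-uniform softmax output $\vp^i_\alpha \approx \frac{1}{d_{\rm vob}}\mathbf{1}$ at small initialization (the same mechanism already used in Corollary~\ref{cor:emb_linear}). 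Everything genuinely quadratic or higher in the embeddings — the $\sigma''(0)$ cross terms multiplied by already-small factors, the $O(s^2)$ Taylor remainder, and the fluctuation of $\vp^i_\alpha$ around uniform — I would sweep into $\tilde\veta$.

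The main obstacle is making the bookkeeping of the three error reservoirs ($\veta_{\vphi_\alpha^y}$, $\frac{1}{d_{\rm vob}}\veta_{\vphi_\alpha^{\vX}}$, $\tilde\veta$) honest and non-circular: one must argue that in the regime where $f_{\rm mod}$'s ordered structure forms, the $\vphi_\alpha^{\vX\mid y}$ term is the only one whose $\alpha$-dependence is not itself $\alpha$-independent — recall that for $f_{\rm mod}$ both $\vphi_\alpha^y$ and $\vphi_\alpha^{\vX}$ are identical across anchors, so $\veta_{\vphi_\alpha^y}$ and $\veta_{\vphi_\alpha^{\vX}}$ are common to all $\alpha$ and cannot create the ordering. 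Justifying the Taylor truncation of $\sigma'$ rigorously would require a small-initialization bound on $\left\|\sum_{x\in\vX}\vW^E_x\right\|$ holding over the relevant time window, i.e. a short-time a priori estimate on the gradient flow; I would state this as the operating assumption (consistent with the $\vW_{i,j}\sim\fN(0,d^{-0.8})$ initialization and the early-training analysis already invoked for the linear model) rather than prove it in full, and note that the softmax-near-uniform claim $\vp^i_\alpha\approx\frac{1}{d_{\rm vob}}\mathbf 1$ rests on the same smallness. The remaining steps — computing $G$ and $G^{(1)}$, exchanging the finite sums with the $N\to\infty$ law-of-large-numbers limit to recover the conditional probabilities in \eqref{eq:distribution}, and assembling the tensor $\mathbb{T}$ — are routine once the expansion is in place.
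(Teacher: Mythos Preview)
Your proposal is correct and follows essentially the same route as the paper's proof: specialize Proposition~\ref{prop:dynamics_emb} to the FFN, approximate $\sigma'(s)\approx c_0+c_1 s$ near zero, identify the constant piece with $\veta_{\vphi_\alpha^y}$, the linear piece (after the $N\to\infty$ conditional average) with $\mathbb{T}\cdot(\vphi_\alpha^{\vX\mid y})^T$, and treat the second sum via the softmax-near-uniform expansion to extract the $\frac{1}{d_{\rm vob}}\veta_{\vphi_\alpha^{\vX}}$ contribution. The only cosmetic difference is that the paper phrases the approximation as a quadratic Weierstrass polynomial for $\sigma$ itself (taking $C_0=0,\,C_1=1,\,C_2=\tfrac12$ without loss of generality), which after differentiation yields exactly your $\sigma'(s)\approx 1+s$ and sidesteps the need for $\sigma$ to be smooth at $0$---a point that matters since the experiments use ReLU.
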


Corollary~\ref{cor:emb_nonlinear} indicates that the ordered embedding structure of $f_{\rm mod}$ primarily relies on probability signature $\vphi_{\alpha}^{\vX\mid y}$. Figure~\ref{fig:cosine_esitimation_addition} A (right) depicts the $\cos\left(\vphi_{\fA}^{\vX\mid y}\right)$ in tasks, which reveals that $\vphi_{\alpha}^{\vX\mid y}$ in $f_{\rm mod}$ constructs an ordered structure, resulting in the ordered embedding structure in $F_{\rm ffn}$. Furthermore, Figure~\ref{fig:cosine_esitimation_addition} C shows the PCA projection on probability signatures and the embedding space in $F_{\rm ffn}$, revealing a high consistency. This comparison demonstrates the impact of the probability signatures in shaping the embedding space.

\subsection{Unembedding Matrix}
The $i$-th row of the unembedding matrix can also be viewed as the feature for the $i$-th token. As shown in Figure~\ref{fig:addition_unemb} A, a similar ordered structure emerges among the unembedding vectors with the label index in $F_{\rm lin}$ across all tasks. Specifically, $\vW^{U}_{\nu}$ in $f_{\rm mod}$ constructs a ring where the similarity between small $\nu$ and large $\nu^{\prime}$ is also large since $\fZ_{\max} + 1 = \fZ_{\min}$ in $f_{\rm mod}$. 

\begin{figure}[h!]
    \centering
    \includegraphics[width=1\linewidth]{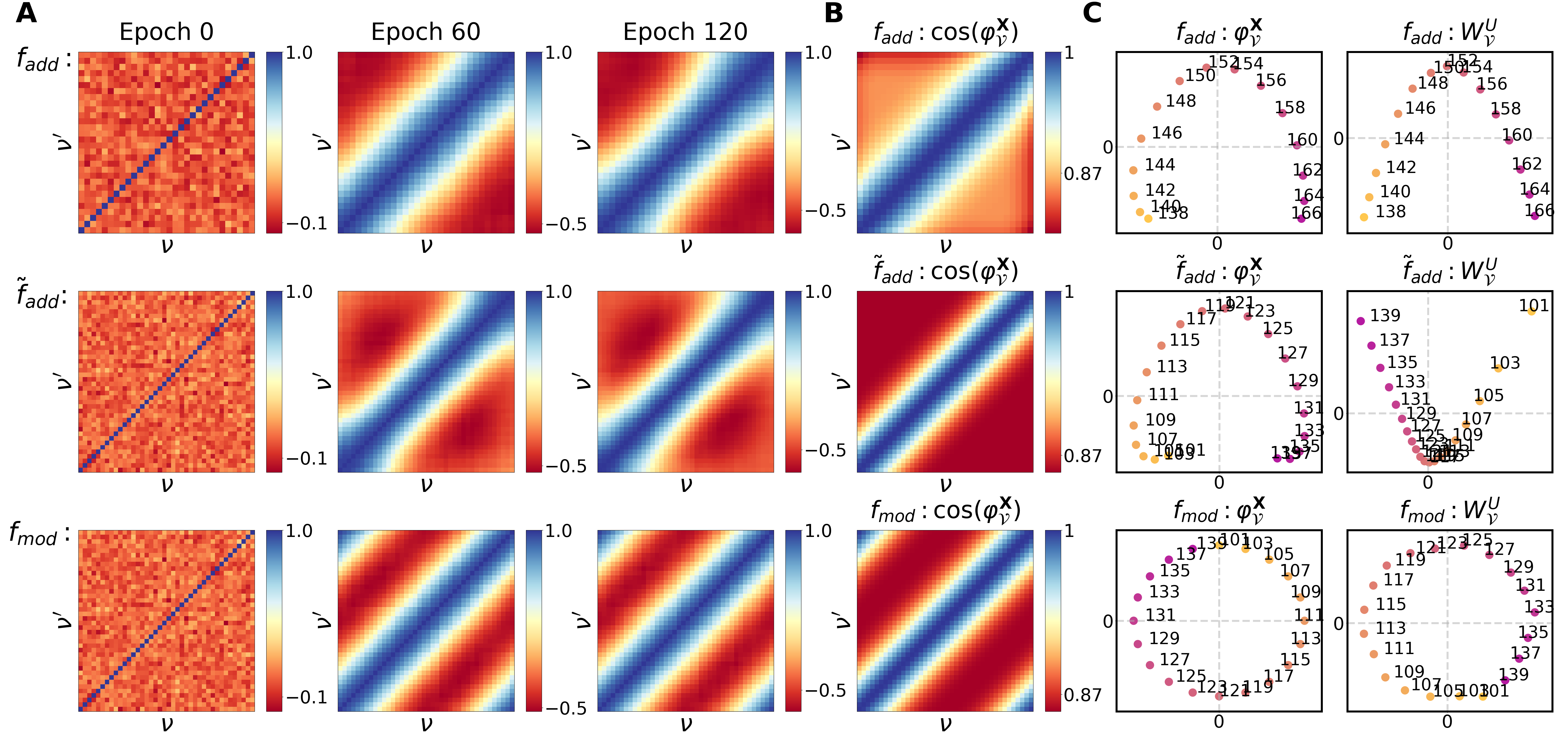}
    \caption{A: The heatmap of the $\cos\left(\vW^U_{\fV}\right)$ with label index in $F_{\rm lin}$ during the training process. B: The heatmap of $\cos\left(\vvarphi_{\fV}^{\vX}\right)$ across different tasks. C: PCA projection of $\vvarphi^{\vX}_{\fV}$ and $\vW^U_{\fV}$ (epoch 120).}
    \label{fig:addition_unemb}
\end{figure}

Similarly, we identify the driving factors of this specific structure by examining the gradient flow of $\vW^{U}$. Since this phenomenon occurs in both $F_{\rm lin}$ and $F_{\rm ffn}$, it suffices to analyze the linear model. Based on Proposition~\ref{prop:dynamics_unemb}, we derive the following result:
\begin{corollary}[Unembedding of Linear Model]\label{cor:unemb_linear}
    Let $N\rightarrow\infty$, $\pi$ denotes the data distribution over the training set. The gradient flow of $\vW^U_\nu$ in $F_{\rm lin}$ could be approximated by
    \begin{equation}\label{eq:unemb}
        \frac{d\vW^{U}_{\nu}}{dt} =L r^{\rm out}_{\nu}\left(\vW^{E}\vvarphi_{\nu}^{\vX}\right)^T + \veta,
    \end{equation}
    where $\veta$ denotes the output term.
\end{corollary}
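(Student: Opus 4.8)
The plan is to start from the exact gradient-flow expression in Proposition~\ref{prop:dynamics_unemb} specialized to $F_{\rm lin}$, where $G\left(\vW^{E}_{\vX}\right)=\sum_{x\in\vX}\vW^{E}_{x}$, and then pass to the population ($N\to\infty$) limit while isolating the data-dependent leading term. Concretely, for the linear model $\left[G\left(\vW^{E}_{\vX^i}\right)\right]^T$ is just the row vector $\sum_{x\in\vX^i}\left(\vW^{E}_{x}\right)^T$, so the first (positive) term in Proposition~\ref{prop:dynamics_unemb} becomes $\frac{r^{\rm out}_{\nu}}{N^{\rm out}_{\nu}}\sum_{i=1}^{N^{\rm out}_{\nu}}\sum_{x\in\vX^i_{\left(\cdot,\nu\right)}}\left(\vW^{E}_{x}\right)^T$. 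First I would rewrite this double sum as a sum over the vocabulary weighted by counts: it equals $r^{\rm out}_{\nu}\sum_{x\in\fV}\left(\frac{1}{N^{\rm out}_{\nu}}\sum_{i}\mathbf{1}\!\left[x\in\vX^i_{\left(\cdot,\nu\right)}\right]\right)\left(\vW^{E}_{x}\right)^T$, where the inner average is, by the law of large numbers, $\mathbb{P}_{\pi}\!\left(x\in\vX\mid y=\nu\right)$ up to the expected sequence length bookkeeping. Recognizing $\sum_{x\in\fV}\mathbb{P}_{\pi}\!\left(x\in\vX\mid y=\nu\right)\ve_x=\vvarphi_{\nu}^{\vX}$ from the definition in \eqref{eq:distribution}, this term collapses to $L\,r^{\rm out}_{\nu}\left(\vW^{E}\vvarphi_{\nu}^{\vX}\right)^T$, with the factor $L$ coming from summing over the $L$ positions in each sequence (or, more carefully, the expected count of distinct-position occurrences, which in these controlled addition tasks is exactly $L$ for the relevant tokens). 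This is the claimed main term.

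Next I would handle the second (negative) term, $-\frac{1}{N}\sum_{i=1}^N\vp^{i,\nu}\left[G\left(\vW^{E}_{\vX^i}\right)\right]^T = -\frac{1}{N}\sum_{i}\vp^{i,\nu}\sum_{x\in\vX^i}\left(\vW^{E}_{x}\right)^T$. Taking $N\to\infty$ this becomes $-\mathbb{E}_{\vX\sim\pi}\!\left[\vp^{\nu}\sum_{x\in\vX}\left(\vW^{E}_{x}\right)^T\right]$. The key point for the corollary is simply that this term is independent of the data *semantics* being isolated — it depends on the model's current output distribution $\vp^\nu$ rather than on the conditional input statistics $\vvarphi_\nu^{\vX}$ — so it is absorbed into the residual $\veta$ that the statement calls "the output term." I would note (consistent with the remark accompanying Corollary~\ref{cor:emb_linear}) that under the small initialization $\vW_{i,j}\sim\fN\!\left(0,d^{-0.8}\right)$, the softmax outputs $\vp^\nu$ are initially near-uniform $\approx 1/d_{\rm vob}$ and the logits $F(\vX)$ are of small magnitude, so $\veta$ is genuinely lower-order during the early phase in which the ordered structure forms; but for the bare statement of the corollary it suffices to define $\veta$ as exactly this expectation.

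The main obstacle, and the place where I would be most careful, is the bookkeeping of the length/count factor $L$ and the precise sense of the $N\to\infty$ limit. Proposition~\ref{prop:dynamics_unemb} is stated with empirical averages over the $N^{\rm out}_\nu$ sequences labeled $\nu$, and turning $\frac{1}{N^{\rm out}_\nu}\sum_i \sum_{x\in\vX^i}$ into $L\sum_{x}\mathbb{P}_\pi(x\in\vX\mid y=\nu)$ requires that (i) every sequence has the same length $L$ (true here: $\vX=[z,\alpha_1,\alpha_2]$, $L=3$), and (ii) the probability signature $\vvarphi_\nu^{\vX}$ as defined sums indicator events $\{x\in\vX\}$ rather than multiplicities, so one must check that no token repeats within a sequence in the relevant regime (or else interpret $\vW^E\vvarphi_\nu^{\vX}$ with the understanding that the repetition contributes an extra copy, folded into constants). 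I would therefore make explicit the identity $\sum_{x\in\vX}\ve_x = \sum_{x\in\fV}\mathbf{1}[x\in\vX]\,\ve_x$ valid when $\vX$ has no repeated tokens, take expectations, and interchange expectation with the finite sum over $\fV$ (trivially justified since $\fV$ is finite). Everything else — linearity of $G^{(1)}$, the transpose conventions, pulling $\vW^E$ out of the sum — is routine, and the corollary follows by combining the limiting leading term with the definition of $\veta$.
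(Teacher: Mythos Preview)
Your proposal is correct and follows essentially the same route as the paper's own proof: specialize Proposition~\ref{prop:dynamics_unemb} to $G\left(\vW^{E}_{\vX}\right)=\sum_{x\in\vX}\vW^{E}_{x}$, rewrite the positive term as a vocabulary-indexed sum that converges to $Lr^{\rm out}_{\nu}\left(\vW^{E}\vvarphi_{\nu}^{\vX}\right)^T$ by the law of large numbers, and absorb the $\vp^{i,\nu}$-weighted negative term into the output residual $\veta$. Your explicit attention to the length factor $L$ and the no-repeated-tokens identity $\sum_{x\in\vX}\ve_x=\sum_{x\in\fV}\mathbf{1}[x\in\vX]\,\ve_x$ is in fact more careful than the paper, which simply writes the $L$ in without comment.
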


Corollary~\ref{cor:unemb_linear} illustrates that $\vvarphi_{\nu}^{\vX}$ plays a significant role in shaping the unembedding matrix. Figure~\ref{fig:addition_unemb} B depicts the distribution of $\cos\left(\vvarphi_{\fV}^{\vX}\right)$, which is aligned with the distribution of the $\cos\left(\vW^U_{\fV}\right)$ in Figure~\ref{fig:addition_unemb} A. Furthermore, Figure~\ref{fig:addition_unemb} C compares the PCA projection of $\vvarphi_{\fV}^{\vX}$ and $\vW^U_{\fV}$ in all tasks, revealing a high consistency and validating our analysis.

\section{Language Model}\label{sec:language_model}

\begin{figure}[t!]
    \centering
    \includegraphics[width=1\linewidth]{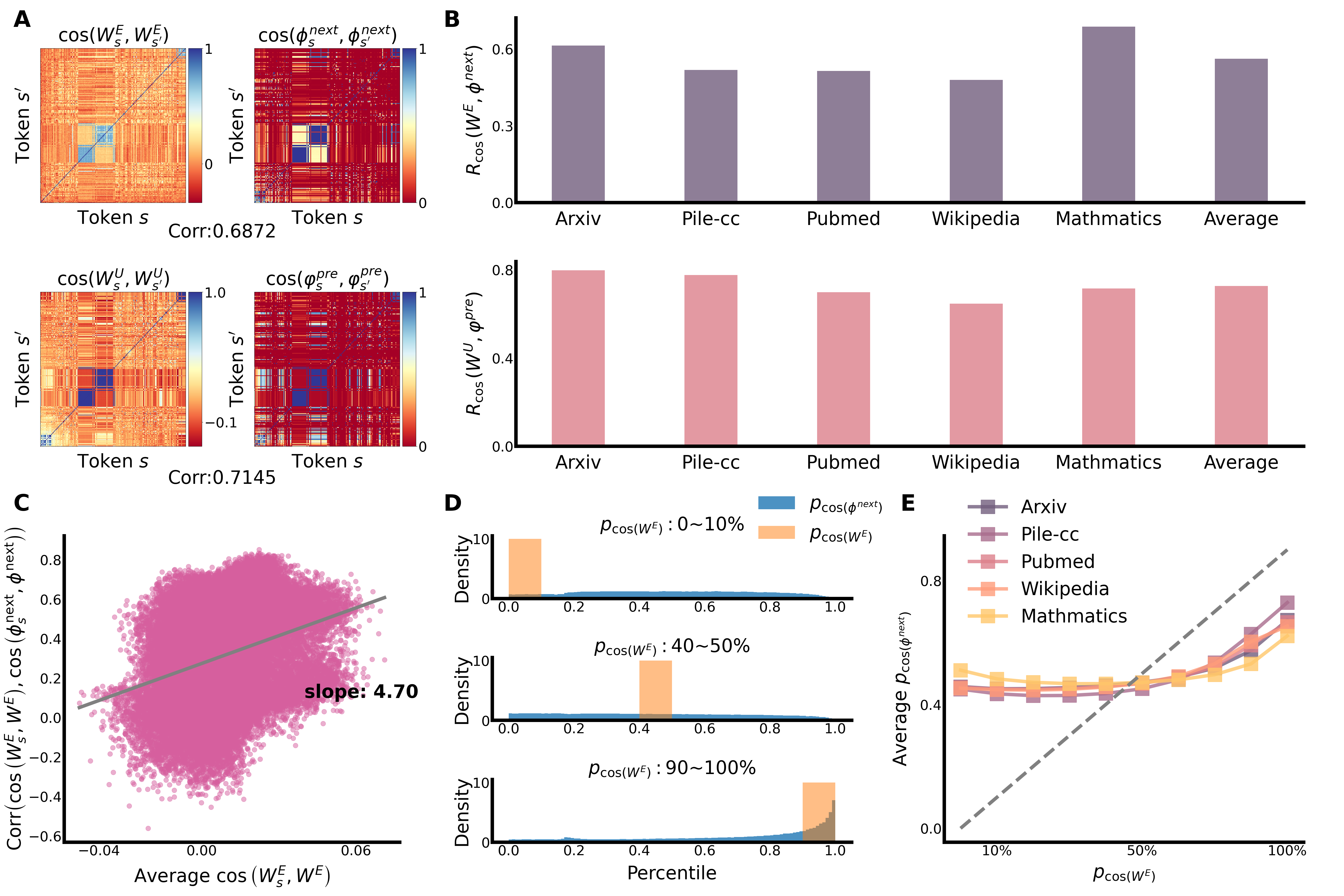}
    \caption{A: Heatmap of the cosine similarity of $\vW^E,\vW^U,\vphi^{\rm next}$ and $\vvarphi^{\rm pre}$. B: $R_{\cos}\left(\vW^E,\vphi^{\rm next}\right)$ (top) and $R_{\rm cos}\left(\vW^U,\vvarphi^{\rm pre}\right)$ (bottom) with different datasets. C: Relation between ${\rm Corr}\left(\cos\left(\vW^E_s,\vW^E\right),\cos\left(\vphi_s^{\rm next},\vphi^{\rm next}\right)\right)$ and the average value of $\cos\left(\vW^E_s,\vW^E\right)$. Each point denotes a token $s$. D: Distribution of $p_{\cos\left(\vphi^{\rm next}\right)}$, conditioned on intervals $0\sim10 \%, 40\sim50 \%$ and $90\sim100 \%$ of the $p_{\cos\left(\vW^E\right)}$. E: Average value of $p_{\cos\left(\vphi^{\rm next}\right)}$ within each interval of $p_{\cos\left(\vW^E\right)}$.}
    \label{fig:qwen_train}
\end{figure}

We have demonstrated the influence of data distribution on the embedding space in the addition tasks. In this section, we explore how to extend this analysis to real-world language models. Most contemporary language models are built upon the Transformer decoder architecture. Assuming the input sequence is denoted as 
$\vX$ with length $L$, we define a language model $F_{\rm lan}$ as:
\begin{align*}
    F_{\rm lan}\left(\vX\right)=\vW^{U}\left(\vW^{E}_{\vX}+\tilde{F}\left(\vX\right)\right)
\end{align*}

Given the training corpus $\left\{\vX^i\right\}_{i=1}^N$, we define the following probability signatures for any $s\in\fV$:
\begin{equation}\label{eq:next_dist}
\begin{aligned} 
    &\vphi_{s}^{\rm next} = \sum_{s^{\prime}\in \fV}\mathbb{P}_{\pi}\left(\cup_{t=1}^{L-1}\left\{X_{t+1}=s^{\prime}\mid X_t=s\right\}\right)\ve_{s^{\prime}},\\ &\vvarphi_s^{\rm pre} = \sum_{s^{\prime}\in\fV}\mathbb{P}_{\pi}\left(\cup_{t=1}^{L-1}\left\{ X_t=s^{\prime}\mid X_{t+1}=s\right\}\right)\ve_{s^{\prime}},
\end{aligned}
\end{equation}
and $\vphi^{\rm next} = \left[\vphi_{s}^{\rm next}\right]_{s\in\fV},\vvarphi^{\rm pre} = \left[\vphi_{s}^{\rm pre}\right]_{s\in\fV}$. We derive the following result:
\begin{corollary}\label{cor:lan}
    Let $N\rightarrow\infty$, $\pi$ denotes the token distribution in the training dataset. The gradient flow of the embedding vector $\vW^{E}_s$ of token $s$ could be fomulated as
    \begin{align*}
        \frac{d\vW^{E}_{s}}{dt} =&r^{\rm in}_s\vW^{{U},T}\vphi_s^{\rm next} + \veta^E.
    \end{align*}
    Furthermore, the gradient flow of the unembedding vector $\vW^U_s$ could be approximated as
    \begin{align*}
    \frac{d\vW^U_s}{dt} = r_{s}^{\rm out}\left(\vW^{E}\vvarphi_s^{\rm pre}\right)^T +\veta^U.
\end{align*}
The $\veta^E$ and $\veta^U$ denote the output probability and the higher-order term.

\end{corollary}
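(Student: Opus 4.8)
The plan is to reduce the statement to the general gradient–flow identities of Propositions~\ref{prop:dynamics_emb} and~\ref{prop:dynamics_unemb}, adapted to the autoregressive (next–token) setting, and then to isolate the leading contribution coming from the residual (``direct'') path $\vW^{U}\vW^{E}_{\vX}$ inside $F_{\rm lan}$. For a causal language model the per–sequence loss is a sum over positions $t$, $\ell^i=\sum_{t}-\log{\rm Softmax}\bigl(F_{\rm lan}(\vX^i_{1:t})\bigr)_{X^i_{t+1}}$, so the gradient of $\ell^i$ with respect to the logit vector at position $t$ is $\vp^i_t-\ve_{X^i_{t+1}}$, exactly the quantity that produced Propositions~\ref{prop:dynamics_emb}--\ref{prop:dynamics_unemb}. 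The only structural changes are that the ``label'' attached to a context is the following token $X_{t+1}$ rather than a single sequence–level $y$, and that the object feeding $\vW^{U}$ is the residual stream $\vW^{E}_{\vX}+\tilde F(\vX)$ rather than $G(\vW^E_{\vX})$; so the hidden map plays the role $G(\vW^E_{\vX})=\vW^E_{\vX}+\tilde F(\vX)$, whose Jacobian with respect to a column $\vW^E_s$ is the identity plus the (higher–order) term $\partial\tilde F/\partial\vW^E_s$.

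First I would compute $\partial F_{\rm lan}(\vX)/\partial \vW^{E}_{s}$: the embedding $\vW^{E}_{s}$ enters the residual stream at every position $t$ with $X_t=s$, contributing $\vW^{U}\vW^{E}_{s}$ to the logits, and also enters indirectly through $\tilde F$ at positions $t'\ge t$. Keeping the direct path gives $\partial(\text{logits at }t)/\partial\vW^E_s\approx\vW^{U}$, so position $t$ contributes $\vW^{U,T}(\ve_{X_{t+1}}-\vp_t)$ to $d\vW^E_s/dt$. Summing over all occurrences of $s$ and letting $N\to\infty$, the empirical average of $\ve_{X_{t+1}}$ over positions immediately following an occurrence of $s$ converges, by the union–event definition in~\eqref{eq:next_dist}, to $\vphi^{\rm next}_s$ up to the occurrence–frequency constant $r^{\rm in}_s$, the average of $\vp_t$ is collected into $\veta^E$, and the $\tilde F$–routed derivatives are likewise absorbed into $\veta^E$; this mirrors Corollary~\ref{cor:emb_linear} with the softmax contribution left unexpanded. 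An entirely parallel computation for $\vW^U_s$, which multiplies the residual stream and whose loss–gradient in the $s$-th logit at position $t$ is $(\vp_t)_s-\mathbf 1\{X_{t+1}=s\}$, yields $d\vW^U_s/dt=\sum_{t:\,X_{t+1}=s}\bigl(\vW^{E}_{X_t}+\tilde F(\vX)_t\bigr)^T-(\text{softmax term})$; averaging as $N\to\infty$ turns the first sum into $r_s^{\rm out}(\vW^{E}\vvarphi^{\rm pre}_s)^T$ via~\eqref{eq:next_dist}, while the softmax term and the $\tilde F$ contribution form $\veta^U$, exactly as in Corollary~\ref{cor:unemb_linear}, so the bookkeeping of normalization constants can be reused.

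The main obstacle is justifying that everything routed through $\tilde F$ — both the Jacobian $\partial\tilde F/\partial\vW^E_s$ and the summand $\vW^U\tilde F(\vX)_t$ in the residual stream — is genuinely subleading and can be folded into $\veta^E,\veta^U$. Under the small–initialization regime used in the experiments this is plausible since $\tilde F$ and its parameter–derivatives start near zero, but making it quantitative requires controlling the attention and MLP Jacobians along the training trajectory; I would state it as the working assumption underlying the ``higher–order term'' label, in the same spirit as the addition–task corollaries. A secondary technical point is the combinatorics of repeated tokens: a token $s$ may occur several times in a sequence and at several positions, so the counts $N^{\rm in}_s,N^{\rm out}_s$ and the union events in~\eqref{eq:next_dist} must be matched carefully so that the empirical bigram statistics converge to precisely $\vphi^{\rm next}_s$ and $\vvarphi^{\rm pre}_s$ rather than to a reweighted variant — routine, but the place where a careless argument would slip.
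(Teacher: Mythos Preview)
Your proposal is correct and follows essentially the same route as the paper: compute the per–position softmax gradient $\vp^i_t-\ve_{X^i_{t+1}}$, backpropagate through $\vW^{U}$ and the residual stream $\vW^{E}_{\vX}+\tilde F(\vX)$, isolate the identity–Jacobian (direct path) contribution, and let $N\to\infty$ so that the empirical next/previous–token frequencies converge to $\vphi^{\rm next}_s$ and $\vvarphi^{\rm pre}_s$. The paper handles your ``main obstacle'' exactly as you anticipate, by invoking the small–initialization scaling $\lVert\vW\rVert_\infty=\mathcal O(d^{-\gamma})$ to bound $\lVert\tilde F^{(1)}\rVert_\infty=\mathcal O(d^{1-2\gamma})$ and then absorbing those terms into $\veta^{E},\veta^{U}$ without further tracking; your instinct that this is the step carrying the analytic weight is accurate.
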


Corollary~\ref{cor:lan} suggests that given any token $s$,  the distributions of its next token and previous token significantly impact its embedding. We trained a group of Qwen2.5 models on different subsets of the Pile. Figure~\ref{fig:qwen_train} A shows these similarity matrices for the dataset Pile-dm-mathematics, where the tokens displayed are those that occur most frequently in the corpus. We define the following correlation coefficient $R_{\cos}\left(\vW^E,\vphi^{\rm next}\right):={\rm Corr}\left(\cos\left(\vW^{E}\right),\cos\left(\vphi^{\rm next}\right)\right)$, and similarly $R_{\rm cos}\left(\vW^U,\vvarphi^{\rm pre}\right)$. Figure~\ref{fig:qwen_train} B depicts both metrics across all subsets, suggesting that probability signatures significantly impact the structure of the embedding space and reflect the relationships among embeddings. Furthermore, we find that the probability signatures reflect the strong connections of embeddings more faithfully. As shown in Figure~\ref{fig:qwen_train} C, the correlation between ${\rm Corr}\left(\cos\left(\vW^E_s,\vW^E\right),\cos\left(\vphi_s^{\rm next},\vphi^{\rm next}\right)\right)$ and $\cos\left(\vW^E_s,\vW^E\right)$ is plotted against for all tokens $s$, demonstrating stronger consistency in high-similarity regions. We define $p_{\cos\left(\vW^E\right)}$ and $p_{\cos\left(\vphi^{\rm next}\right)}$ as the percentile matrix of each elements in $\cos\left(\vW^E\right)$ and $\cos\left(\vphi^{\rm next}\right)$, respectively. Figure~\ref{fig:qwen_train} D displays the distribution of $p_{\cos\left(\vphi^{\rm next}\right)}$, conditioned on different intervals of the $p_{\cos\left(\vW^E\right)}$, and Figure~\ref{fig:qwen_train} E shows the average value of $p_{\cos\left(\vphi^{\rm next}\right)}$ within each interval of $p_{\cos\left(\vW^E\right)}$. It can be observed that the alignment is significantly stronger in the regions with large embedding similarity. In Appendix~\ref{app:qwen}, we provide a detailed method explanation, a specific case of the token group with large similarity, and an analysis with the Llama-2 architecture to validate the generalization of our analysis. 

Since general-purpose pretrained base models are trained on broad corpora, we attempt to directly estimate their embedding structure with a subset of general text. We combine all datasets employed in Figure~\ref{fig:qwen_train} and define $\tilde{\vphi}=\vphi^{\rm next}+\vvarphi^{\rm pre}$ (Since the tied embedding, the detail is provided in Appendix~\ref{app:lan_tech}). We compare the $\cos\left(\tilde{\vphi}\right)$ with $\cos\left(\vW^E\right)$ of Qwen2.5-3B-base. As shown in Figure~\ref{fig:qwen_emb} A, the structure of $\tilde{\vphi}$ could capture the main properties of the pre-trained model’s embedding structure, particularly the presence of sub-blocks with high similarity. Furthermore, we examine the instance for the digits ranging from 0 to 9. Figure~\ref{fig:qwen_emb} B illustrates the $\cos\left(\vW^E\right)$ and $\cos\left(\tilde{\vphi}\right)$ of such digits, both revealing an ordered organization that aligns with their numerical sequence. It should be noted that this estimation may not generalize across all open-source base models, as it is sensitive to both the initialization of the pre-trained model and the true training dataset.
\begin{figure}[htbp]
    \centering
    \includegraphics[width=0.9\linewidth]{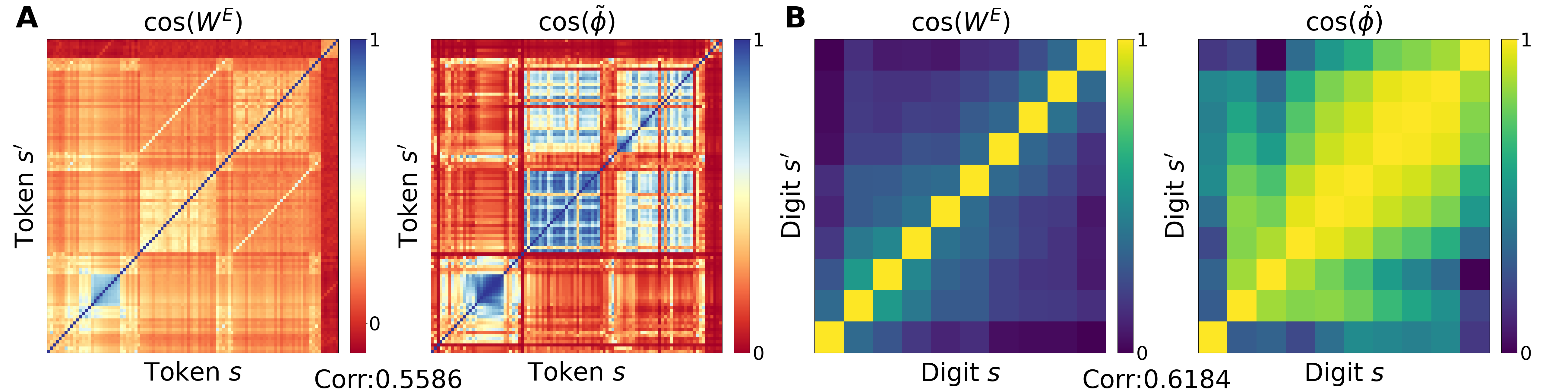}
    \caption{Cosine similarity of $\vW^E$ of the Qwen2.5-3B-base and $\tilde{\vphi}$, respectively, with the frequently-appearing tokens (A) and the digits from $0$ to $9$ (B).}
    \label{fig:qwen_emb}
\end{figure}

\section{Conclusion}
In this work, we investigate the formation of embedding structures in language models. By interpreting the relationship between embedding organization and semantic structure through the lens of data distribution, we propose the probability signatures and design the addition tasks to conduct variable-controlled experiments. Our findings demonstrate that probability signatures play a crucial role in shaping the embedding structure and reflecting underlying semantic relationships. An extended analysis of LLMs further confirms our analysis. This study establishes a bridge between data semantics and embedding space, offering new insights into the understanding of the joint impact of model, data and optimization method. For future work, we plan to extend our theoretical analysis into a comprehensive framework. Besides, we aim to incorporate the self-attention mechanism into our analysis of the LLMs, which is essential for capturing more subtle and complex relationships among embeddings that remain beyond the reach of our current methods.

\bibliography{iclr2026_conference}
\bibliographystyle{iclr2026_conference}

\newpage
\appendix

\section{Experimental Setups}\label{app:exp_setup}

\paragraph{Addition tasks} For each type of addition task, we trained a linear model $F_{\rm lin}$ and a Feedforward network $F_{\rm ffn}$. The hidden size $d=200$, and we employed the ReLU as the activation function. Each dataset contains 50000 data pairs. The training is conducted for 1000 epochs with a batch size of 100. The AdamW optimizer is employed with an initial learning rate of $10^{-5}$.

\paragraph{Language Models} In the analysis of the LLMs, we employ the Qwen2.5 architecture with 12 layers and 12 attention heads in each layer. We set up that the hidden size is 512, and the intermediate size in FFN is 1024. The dimension of the key vectors and value vectors in each head is 64. Similarly, we initialize the parameter by $\vW_{i,j}\sim\fN\left(0,d_{\rm in}^{-1}\right)$ where $d_{\rm in}$ means the input dimension of $\vW$. We select five subsets of Pile, including Pile-arxiv, Pile-dm-mathematics, Pile-cc, Pile-pubmed-central, and Pile-wikipedia-en. The length of each sequence is 2048. The training is conducted for 1 epoch in each experiment, with the AdamW optimizer and a cosine learning rate schedule
 utilized. The initial learning rate is $10^{-4}$.

\section{Addition Task}
\subsection{Probability Signatures in Addition Tasks}\label{app:addition_distribution}
We provide a formulation of the following probability in the three addition tasks. We denote $U\left(\fA\right)$ and $U\left(\fZ\right)$ as the discrete uniform distribution over $\fA$ and $\fZ$, respectively. $A$ and $Z$ are the random variables following $U\left(\fA\right)$ and $U\left(\fZ\right)$. For the task $f_{\rm add}$, we have that
\begin{align*}
    &\mathbb{P}_{\pi}\left(y=\nu\mid \alpha\in \vX\right) =\mathbb{P}_{\pi}\left(A+Z=\nu-\alpha\right),\quad \mathbb{P}_{\pi}\left(z\in\fX\mid \alpha\in \vX\right) =\frac{1}{|\fZ|},\\
    &\mathbb{P}_{\pi}\left(z\in\vX\mid \alpha\in \vX,y=\nu\right) = \mathbb{P}_{\pi}\left(A=\nu-\alpha-z\right)=\frac{1}{|\fA|}\delta_{\nu-\alpha-z\in \fA}, \\
    &\mathbb{P}_{\pi}\left(\alpha^{\prime}\in\vX\mid \alpha\in \vX,y=\nu\right) = \mathbb{P}_{\pi}\left(Z=\nu-\alpha-\alpha^\prime\right)=\frac{1}{|\fZ|}\delta_{\nu-\alpha-\alpha^\prime\in \fZ}, \\
    &\mathbb{P}_{\pi}\left(z\in\vX\mid y=\nu\right)=\mathbb{P}_{\pi}\left(A+A=\nu-z\right),\quad \mathbb{P}_{\pi}\left(\alpha\in\vX\mid y=\nu\right)=\mathbb{P}_{\pi}\left(A+Z=\nu-\alpha\right),
\end{align*}
where $\alpha,\alpha^\prime\in\fA,z\in\fZ$.It's noted that besides the co-occurrence probability $\mathbb{P}_{\pi}\left(z\in\fX\mid \alpha\in \vX\right)$, the value of other ones is dependent on $\alpha$ or $\nu$. Figure~\ref{fig:addition_distribution} (left) displays the distribution of these probabilities, which intuitively reveals the cause of the hierarchy structure in the similarity matrix. Similarly, for $\tilde{f}_{\rm add}$, denote $Y\sim U\left(\fY\right)$ and we have
\begin{align*}
    &\mathbb{P}_{\pi}\left(y=\nu\mid \alpha\in \vX\right) =\frac{1}{|\fY|},\quad \mathbb{P}_{\pi}\left(z\in\fX\mid \alpha\in \vX\right) =\mathbb{P}_{\pi}\left(Y-A=z+\alpha\right),\\
    &\mathbb{P}_{\pi}\left(z\in\vX\mid \alpha\in \vX,y=\nu\right) = \mathbb{P}_{\pi}\left(A=\nu-\alpha-z\right)=\frac{1}{|\fA|}\delta_{\nu-\alpha-z\in \fA}, \\
    &\mathbb{P}_{\pi}\left(\alpha^{\prime}\in\vX\mid \alpha\in \vX,y=\nu\right) = \frac{1}{|\fZ|}, \\
    &\mathbb{P}_{\pi}\left(z\in\vX\mid y=\nu\right)=\mathbb{P}_{\pi}\left(A+A=\nu-z\right),\quad \mathbb{P}_{\pi}\left(\alpha\in\vX\mid y=\nu\right)=\mathbb{P}_{\pi}\left(A+Z=\nu-\alpha\right).
\end{align*}
For $f_{\rm mod}$, we have 
\begin{align*}
    &\mathbb{P}_{\pi}\left(y=\nu\mid \alpha\in \vX\right) =\frac{1}{|\fZ|},\quad \mathbb{P}_{\pi}\left(z\in\fX\mid \alpha\in \vX\right) =\frac{1}{|\fZ|},\\
    &\mathbb{P}_{\pi}\left(z\in\vX\mid \alpha\in \vX,y=\nu\right) =\frac{1}{|\fA|}\delta_{\nu-\min\fZ-\left(\alpha-z\text{  mod}|\fZ|\right)\in \left(A\text{  mod}|\fZ|\right)}, \\
    &\mathbb{P}_{\pi}\left(\alpha^{\prime}\in\vX\mid \alpha\in \vX,y=\nu\right) = \frac{1}{|\fZ|}, \\
    &\mathbb{P}_{\pi}\left(z\in\vX\mid y=\nu\right)=\mathbb{P}_{\pi}\left(\left(A+A\text{  mod} |\fZ|\right)=\nu-\min\fZ-\left(z\text{  mod}|\fZ|\right)\right),\\
    &\mathbb{P}_{\pi}\left(\alpha\in\vX\mid y=\nu\right)=\mathbb{P}_{\pi}\left(\left(A+Z\text{  mod} |\fZ|\right)=\nu-\min\fZ-\left(\alpha\text{  mod}|\fZ|\right)\right).
\end{align*}
Figure~\ref{fig:addition_distribution} depicts all these probability distributions.
\begin{figure}[htbp]
    \centering
    \includegraphics[width=0.9\linewidth]{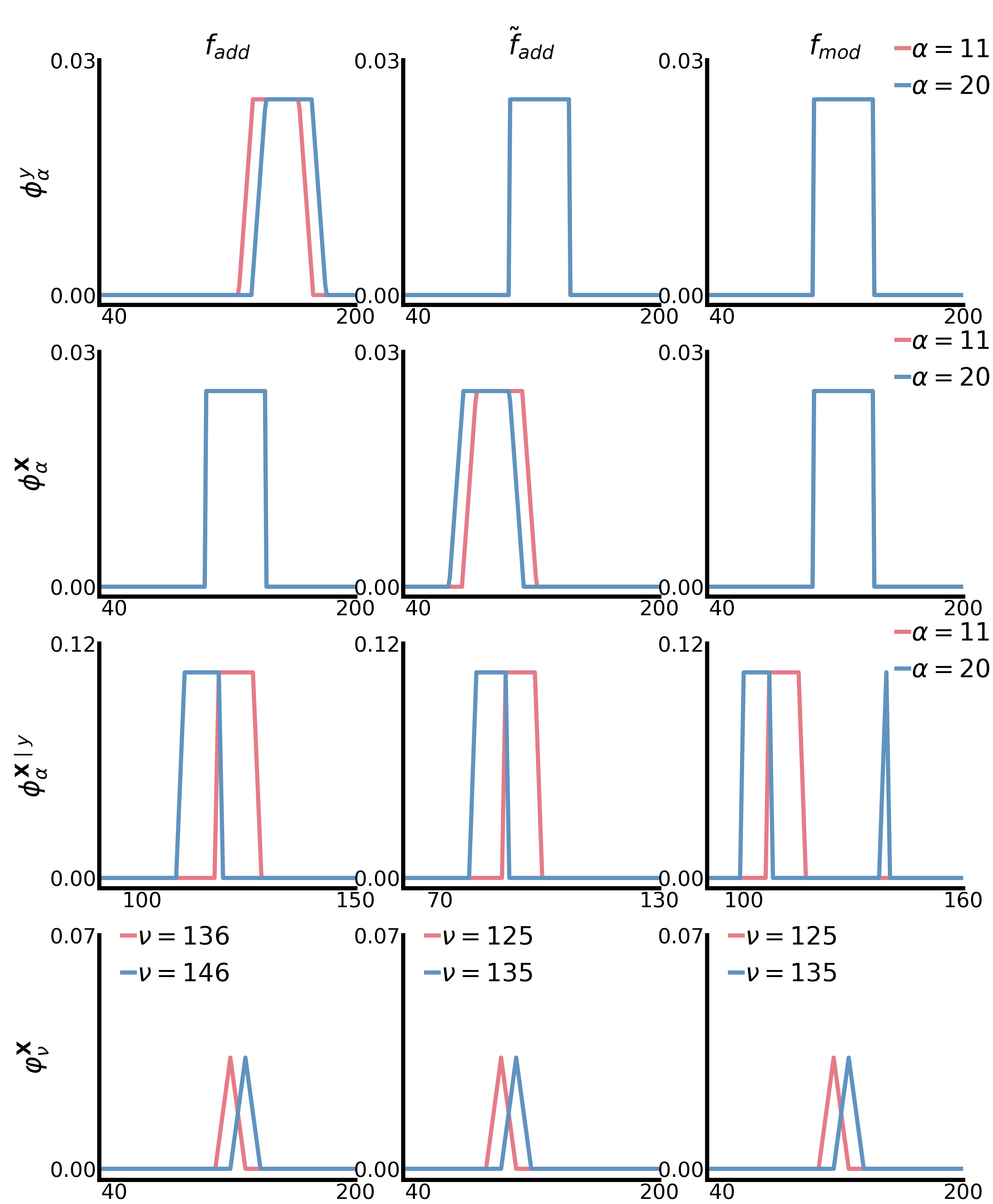}
    \caption{Probability signatures in each task under distinct $\alpha$ and $\nu$. In the distribution of $\vphi_{\alpha}^{\vX\mid y}$, $y=150$ is displayed in $f_{\rm add}$ and $y=120$ in $\tilde{f}_{\rm add}$ and $f_{\rm mod}$, since  $150$ and $120$ are the average label value in each task.}
    \label{fig:addition_distribution}
\end{figure}

\subsection{Embedding matrix in Linear Model}
Figure~\ref{fig:embedding_pca} depicts the PCA projection of the anchor embeddings in $F_{\rm lin}$, revealing that $f_{\rm add}$ and $\tilde{f}_{\rm add}$ both establish an ordered structure while the anchor embeddings in $f_{\rm mod}$ are chaotic.
\begin{figure}[htbp]
    \centering
    \vspace{-10pt}
    \includegraphics[width=0.9\linewidth]{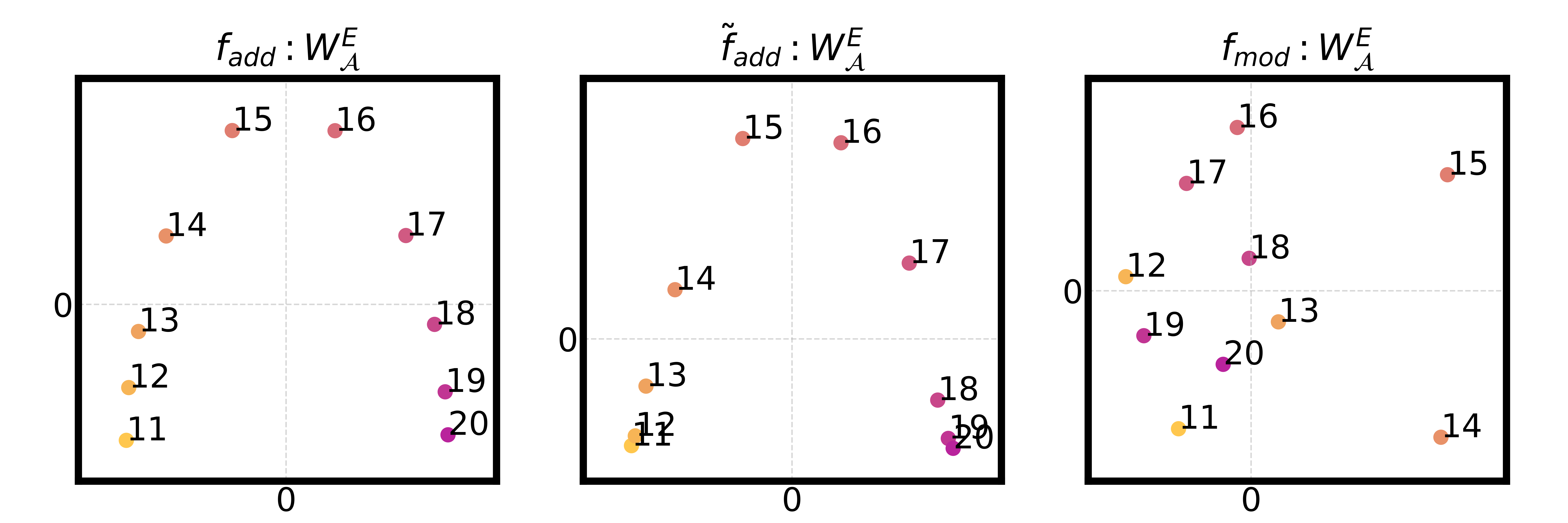}
    \caption{PCA projection of  $\vW^E_{\fA}$ in $F_{\rm lin}$ (epoch 120).}
    \label{fig:embedding_pca}
\end{figure}

\subsection{Umembedding matrix in Feedforward Network}
Figure~\ref{fig:ffn_unembedding} displays the structure of the unembedding matrix in $F_{\rm ffn}$ with the three types of addition tasks. The distribution of $\cos\left(\vW^U_{\nu}\right)$ (A) and the PCA projection (B) jointly reveal that the unembedding vectors of those label tokens establish a hierarchy structure, which is consistent with their natural sequence. 
\begin{figure}[htbp]
    \centering
    \vspace{-10pt}
    \includegraphics[width=0.9\linewidth]{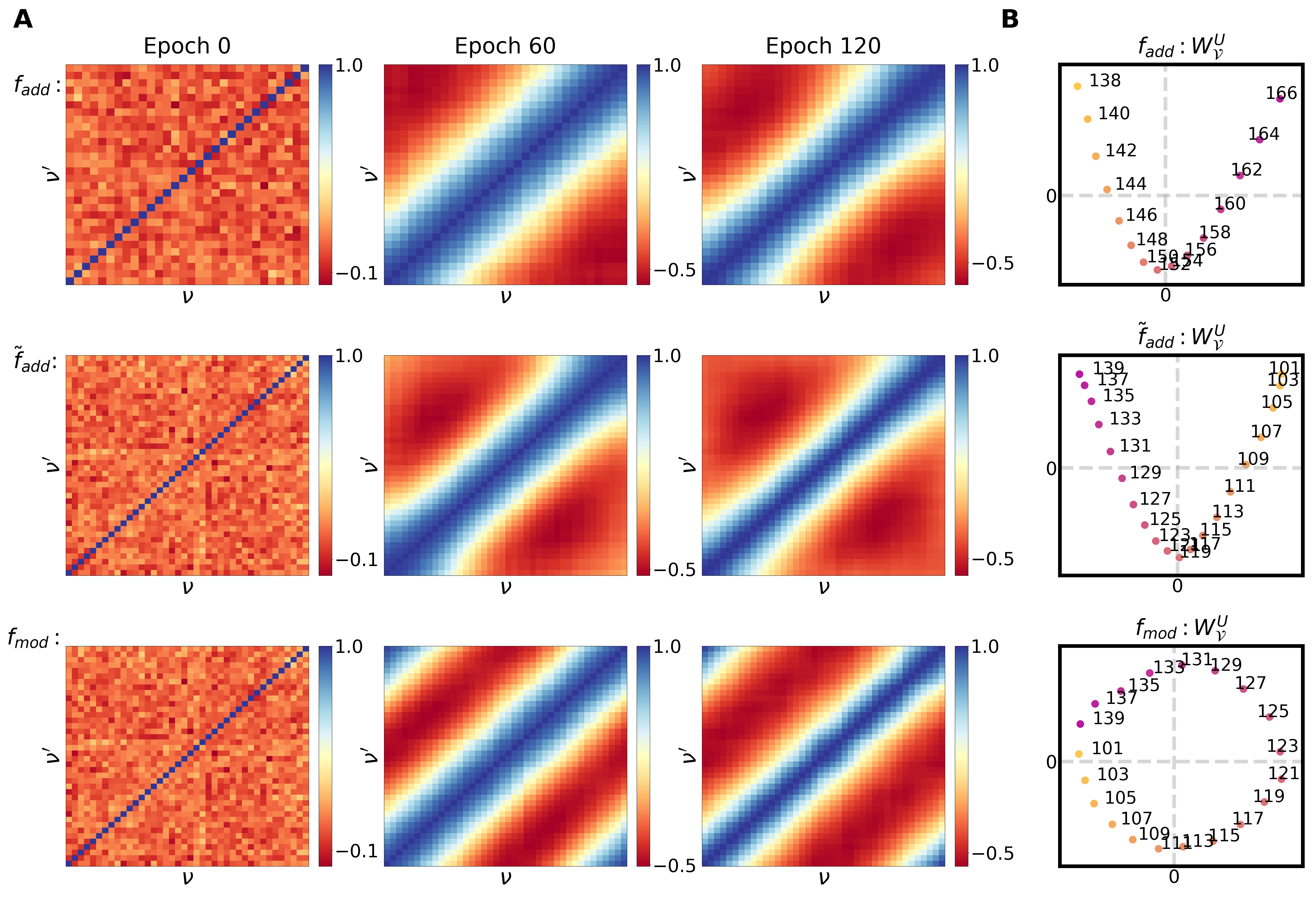}
    \caption{A: The heatmap of the $\cos\left(\vW^U_{\fV}\right)$ with label index in $F_{\rm ffn}$ during the training process. B: PCA projection of  $\vW^U_{\fV}$ in $F_{\rm ffn}$ (epoch 120).}
    \label{fig:ffn_unembedding}
\end{figure}
\newpage
\section{Language Models}\label{app:qwen}

\subsection{Technical Details}\label{app:lan_tech}

\paragraph{Remark about Figure~\ref{fig:qwen_train} C} In each subset $D_i,i=1,2,\cdots M$, we define the set $\fS_i=\left\{s^i_j\right\}_{j=1}^{C_i}$ as the set of the $C_i$ tokens which appear most frequently in $D_i$. Based on the dataset $D_i$, and denote $\vW^{E_i}$ as the embedding matrix of the model corresponding to dataset $D_i$, we compute that
\begin{equation*}
    \cos_{D_i}\left(\vW^{E}_{s^i_j},\vW^{E}\right) = \left[\cos\left(\vW^{E_i}_{s^i_j},\vW^{E_i}_{s^{\prime}}\right)\right]_{s^\prime\in\fS_i}\in\sR^{C_i},
\end{equation*}
and 
\begin{equation*}
    \cos_{D_i}\left(\vphi^{\rm next}_{s^i_j},\vphi^{\rm next}\right) = \left[\cos\left(\vphi^{\rm next}_{s^i_j},\vphi^{\rm next}_{s^\prime}\right)\right]_{s^\prime\in\fS_i}\in\sR^{C_i}.
\end{equation*}
for any token $s^i_j\in\fS_i$. Then we define the correlation coefficient
\begin{align*}
    R_{D_i}\left(s^i_j\right) = {\rm Corr}\left(\cos_{D_i}\left(\vW^E_{s^i_j},\vW^E\right),\cos_{D_i}\left(\vphi^{\rm next}_{s^i_j},\vphi^{\rm next}\right)\right)
\end{align*}
and the average embedding similarity as
\begin{align*}
    {\rm Mean}_{\vW^E,D_i}\left(s^i_j\right)=\frac{1}{C_i}\cos_{D_i}\left(\vW^E_{s^i_j},\vW^E\right)\cdot \mathbf{1}.
\end{align*}

 Then we concatenate the metrics with all token $s_j^i\in\fS_i,j=1,2,\cdots,C_i$ and all datasets $\fS_i,i=1,2,\cdots,M$, i.e.
\begin{align*}
    &{\rm Corr}\left(\cos\left(\vW^E_s,\vW^E\right),\cos\left(\vphi^{\rm next}_s,\vphi^{\rm next}\right)\right)= \left[R_{D_i}\left(s^i_j\right)\right]_{j=1,2,\cdots,C_i}^{i=1,2,\cdots, M}\in\sR^{\sum_{i=1}^M C_i},\\
    &{\rm Mean}\left(\cos\left(\vW^E_s,\vW^E\right)\right) = \left[{\rm Mean}_{\vW^E,D_i}\left(s^i_j\right)\right]_{j=1,2,\cdots,C_i}^{i=1,2,\cdots, M}\in\sR^{\sum_{i=1}^M C_i}.
\end{align*}
Figure~\ref{fig:qwen_train} displays the relation between ${\rm Corr}\left(\cos\left(\vW^E_s,\vW^E\right),\cos\left(\vphi^{\rm next}_s,\vphi^{\rm next}\right)\right)$ and ${\rm Mean}\left(\cos\left(\vW^E_s,\vW^E\right)\right)$, revealing a positive correlation. In our work, $M=5$, and we set up $C_i=10000$ for each dataset.

\paragraph{Remark about Figure~\ref{fig:qwen_train} D \& E}
In each subset $D_i,i=1,2,\cdots M$, we define the set $\fS_i=\left\{s^i_j\right\}_{j=1}^{C_i}$ as the set of the $C_i$ tokens which appear most frequently in $D_i$. We compute that 
\begin{align*}
    \cos_{D_i}\left(\vW^E\right) = \left[\cos\left(\vW^{E_i}_s,\vW^{E_i}_{s^\prime}\right)\right]_{s,s^\prime\in\fS_i}\in\sR^{C_i\times C_i}
\end{align*}
and 
\begin{align*}
    \cos_{D_i}\left(\vphi^{\rm next}\right) = \left[\cos\left(\vphi^{\rm next}_s,\vphi^{\rm next}_{s^\prime}\right)\right]_{s,s^\prime\in\fS_i}\in\sR^{C_i\times C_i}.
\end{align*}
Then translate the similarity matrix into a percentile formulation, i.e.
\begin{align*}
    p_{\cos_{D_i}\left(\vW^E\right)} = {\rm Percentile}\left(\cos_{D_i}\left(\vW^E\right)\right),\quad
    p_{\cos_{D_i}\left(\vphi^{\rm next}\right)} = {\rm Percentile}\left(\cos_{D_i}\left(\vphi^{\rm next}\right)\right)
\end{align*}
and $p_{\cos\left(\vW^E\right)}=\left[p_{\cos_{D_i}\left(\vW^E\right)}\right]_{i=1,2,\cdots,M},\quad p_{\cos\left(\vphi^{\rm next}\right)}=\left[p_{\cos_{D_i}\left(\vphi^{\rm next}\right)}\right]_{i=1,2,\cdots,M}$. Figure~\ref{fig:qwen_train} D and E reveal the distribution and average value of $p_{\cos\left(\vphi^{\rm next}\right)}$, where $k\times 10 \%\leq p_{\cos\left(\vW^E\right)}<(k+1)\times 10 \%, k=0,1,2,\cdots,9$.
\paragraph{Tied Embedding} In the Qwen2.5-3B-base model, the embedding matrix and unembedding matrix are the same one, which aims for computational source saving. Under this condition, we have that
\begin{align*}
    \frac{d\vW^{E}_{s}}{dt} &=r^{\rm in}_s\vW^{{U},T}\vphi_s^{\rm next} +  r_{s}^{\rm out}\vW^{E}\vvarphi_s^{\rm pre} +\veta\\
    &=\vW^E\left(r^{\rm in}_s\vphi_s^{\rm next}+r_{s}^{\rm out}\vvarphi_s^{\rm pre}\right)+\veta.
\end{align*}
Since the next-token-prediction, each token will be an input and an output, except the last token in a sequence, resulting in $r^{\rm in}_s \approx r^{\rm out}_s$. Denote $r_s=r^{\rm in}_{s}$ and $\tilde{\vphi}_s=\vphi_s^{\rm next}+\vvarphi_s^{\rm pre}$, then we have
\begin{align*}
    \frac{d\vW^{E}_{s}}{dt} = r_s\vW^E\tilde{\vphi}_s+\veta.
\end{align*}

\newpage
\subsection{Complete results}
Figure~\ref{fig:qwen_appendix} represents the cosine similarity distribution of $\vW^E,\vphi^{\rm next},\vW^U$ and $\vvarphi^{\rm pre}$ in the other 4 subsets of Pile we selected, exhibiting an analogous phenomenon with the observation in Figure~\ref{fig:qwen_train}. The distribution representations $\vphi^{\rm next}$ and $\vvarphi^{\rm pre}$ could effectively capture the high similarity among embedding vectors and unembedding vectors. Figure~\ref{fig:qwen_appendix_distribution} displays the completed result of Figure~\ref{fig:qwen_train} D.
\begin{figure}[htbp]
    \centering
    \vspace{-10pt}\includegraphics[width=0.7\linewidth]{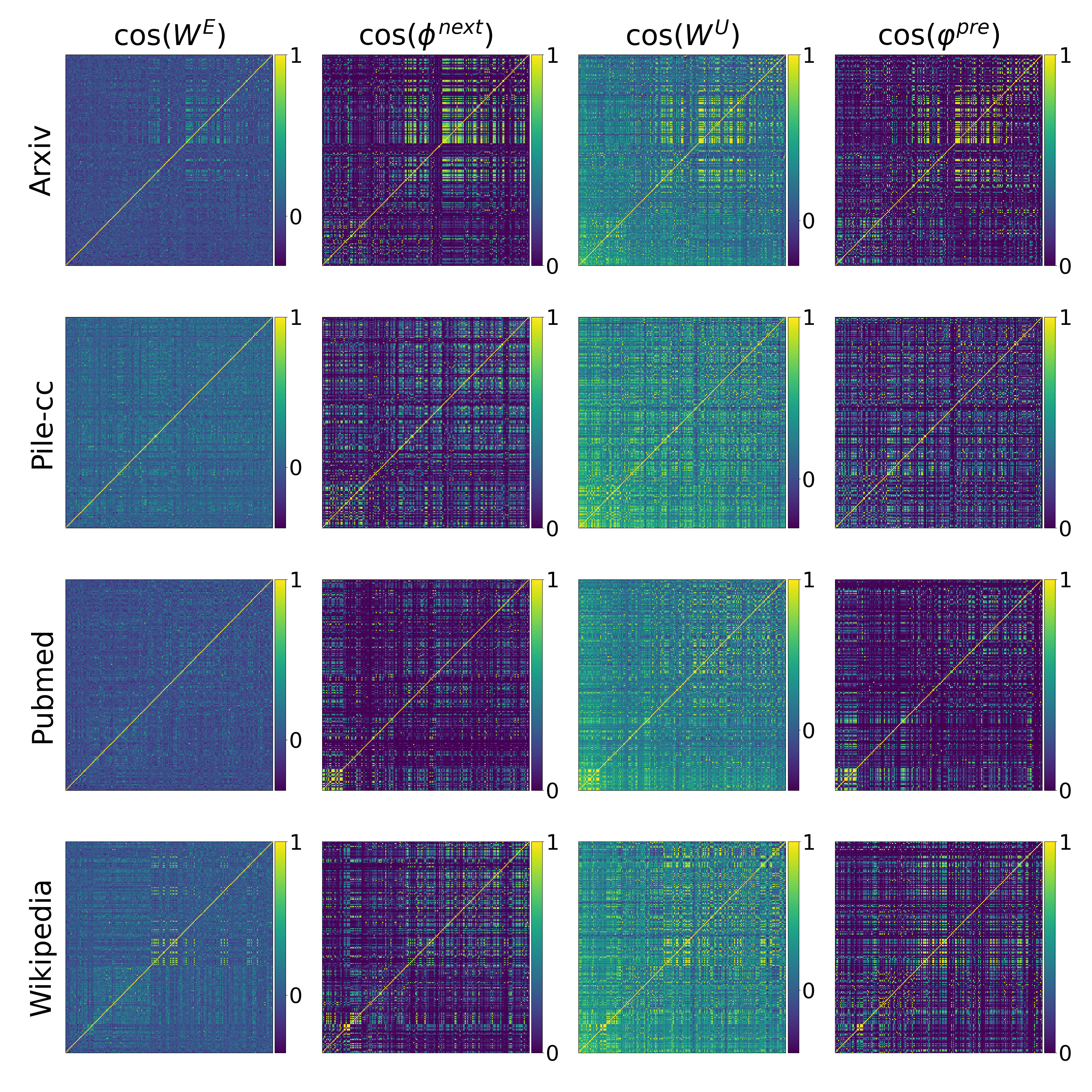}
    \vspace{-10pt}
    \caption{Cosine similarity distribution of $\vW^E,\vphi^{\rm next},\vW^U,\vvarphi^{\rm pre}$ in each experiment with distinct dataset. The tokens displayed are those with the most appearances in the dataset.}
    \vspace{-10pt}
    \label{fig:qwen_appendix}
\end{figure}

\begin{figure}[htbp]
    \centering
    \vspace{-10pt}
    \includegraphics[width=0.7\linewidth]{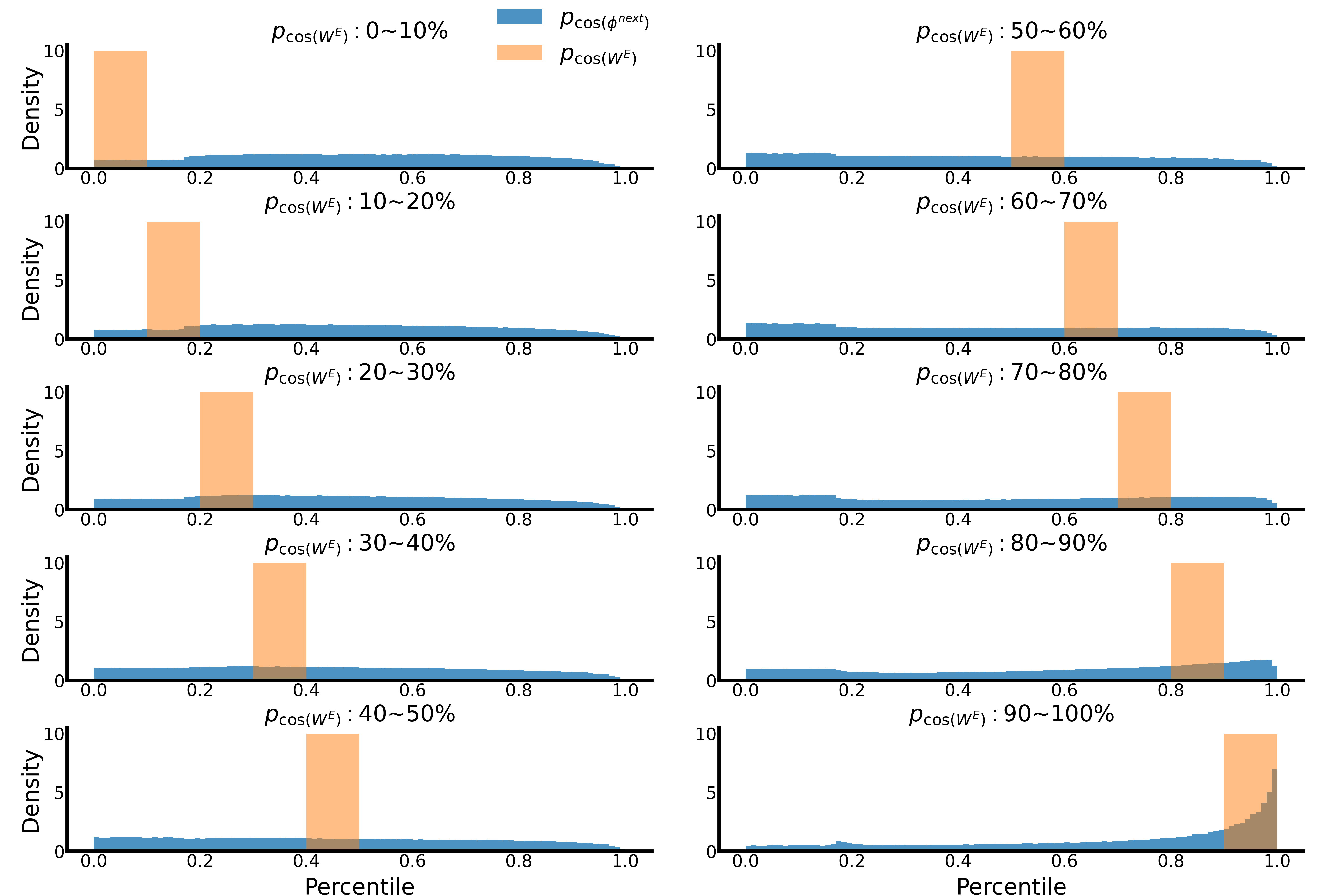}
    \vspace{-10pt}
    \caption{Distribution of $p_{\cos\left(\vphi^{\rm next}\right)}$, conditioned on intervals $0\sim10 \%, 10\sim20 \%,\cdots,  90\sim100 \%$ of the $p_{\cos\left(\vW^E\right)}$.}
    \vspace{-10pt}
    \label{fig:qwen_appendix_distribution}
\end{figure}

\subsection{Case Analysis}
We provide a detailed case to explain the group of tokens exhibiting high embedding similarities. In experiments on the Pile-dm-mathematics dataset, tokens such as ``$/a$'', ``$/b$'', ``$/c$'', and ``$/d$'' often serve as denominators in mathematical expressions. Figure~\ref{fig:qwen_case} shows the cosine similarities of both their embedding vectors and distribution representations, which are notably high for all tokens except ``$/e$'', which does not appear in the dataset. These tokens share highly similar semantics and also exhibit very similar next-token distributions, most frequently followed by ``*'' or ``)''. This similarity in next-token distribution leads to strong similarities in their embedding vectors. This example vividly illustrates how data distribution shapes semantic structure within the embedding space, particularly in the case of tokens with high semantic affinity.
\begin{figure}[htbp]
    \centering
    \includegraphics[width=1\linewidth]{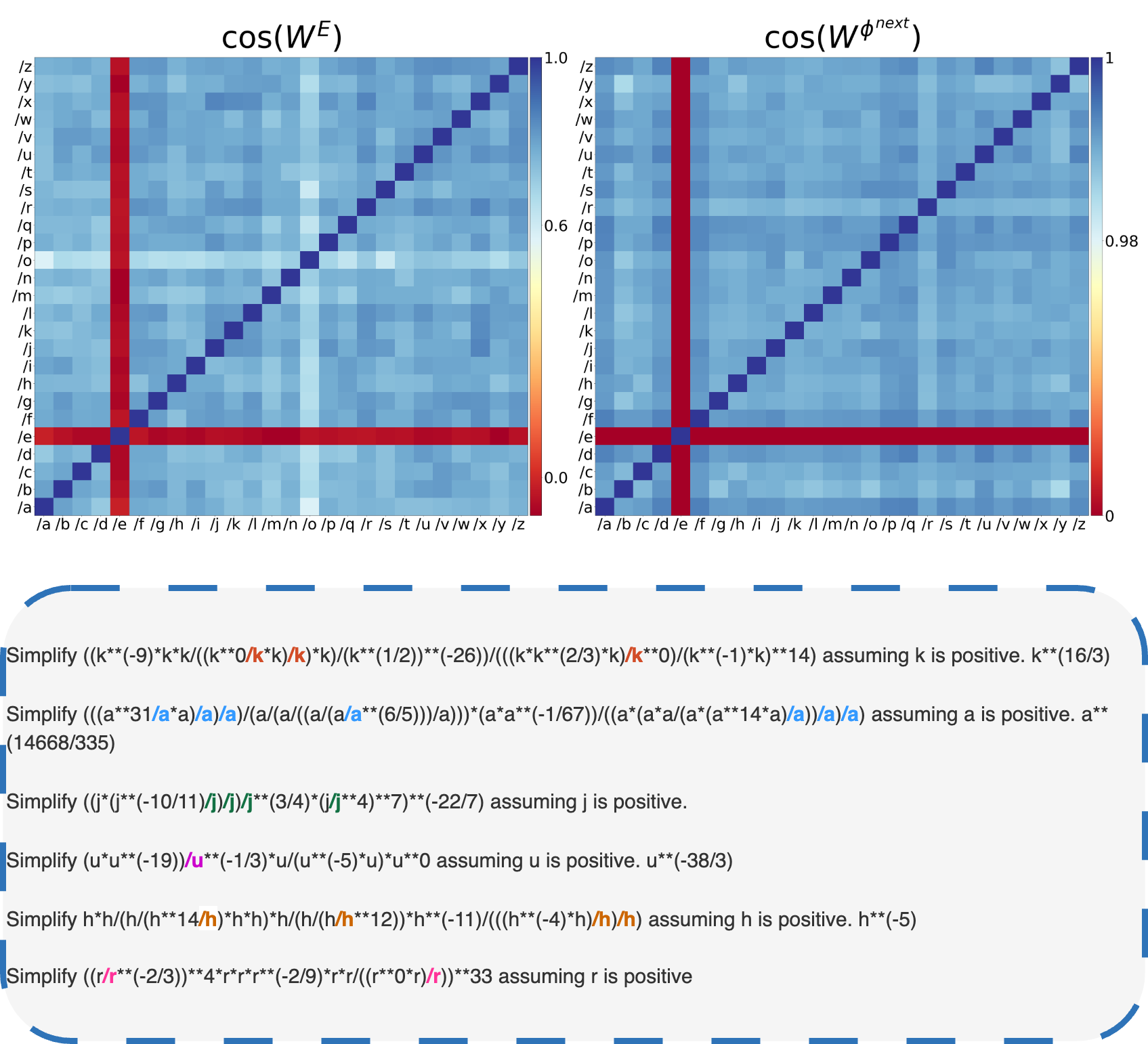}
    \caption{A case analysis of the token group ``$/a$'', ``$/b$'', ``$/c$'', etc. The first row depicts the cosine similarity of their embeddings (left) and distribution representations (right). The second row exhibits the contexts containing these tokens, which are highlighted by different colors.}
    \label{fig:qwen_case}
\end{figure}

\newpage

\subsection{Results of LLama 2}\label{app:llama}
To assess the generalizability of our analysis in Section~\ref{sec:language_model} across different model architectures and tokenizers, we replicate the experiment using the Llama 2 architecture. We employ the same dataset from Pile, and the training configurations are the same as the experiments of Qwen2.5. As shown in Figure~\ref{fig:llama_train}, the probability signatures effectively capture structural relationships in the embedding space, especially in regions exhibiting high embedding similarity. These results align closely with those in Figure~\ref{fig:qwen_train}, indicating that our analytical approach is robust to variations in model architecture.
\begin{figure}[htbp]
    \centering
    \includegraphics[width=1\linewidth]{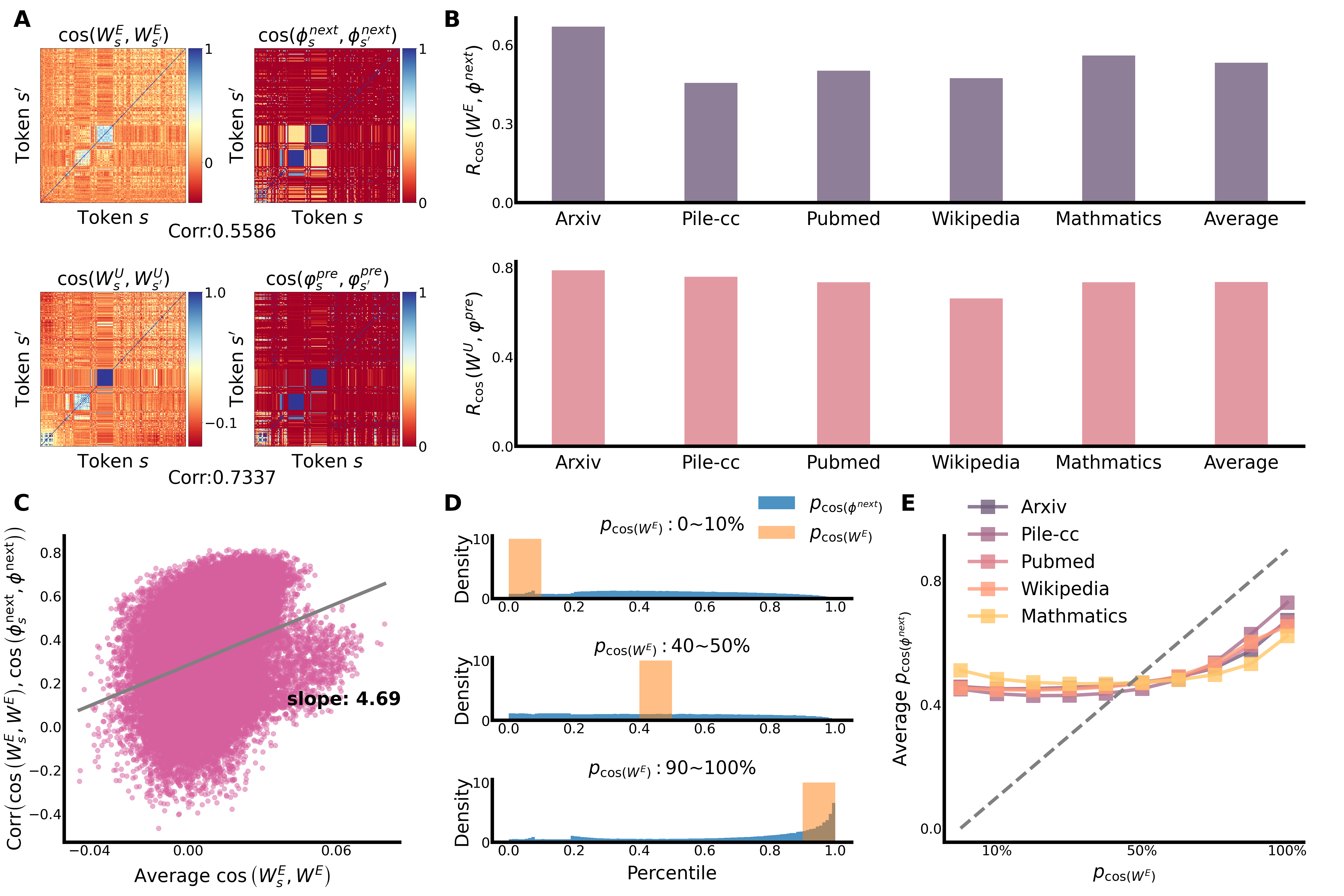}
    \caption{Results with Llama-2 architecture. A: Heatmap of the cosine similarity of $\vW^E,\vW^U,\vphi^{\rm next}$ and $\vvarphi^{\rm pre}$. B: $R_{\cos}\left(\vW^E,\vphi^{\rm next}\right)$ (top) and $R_{\rm cos}\left(\vW^U,\vvarphi^{\rm pre}\right)$ (bottom) with different datasets. C: Relation between ${\rm Corr}\left(\cos\left(\vW^E_s,\vW^E\right),\cos\left(\vphi_s^{\rm next},\vphi^{\rm next}\right)\right)$ and the average value of $\cos\left(\vW^E_s,\vW^E\right)$. Each point denotes a token $s$. D: Distribution of $p_{\cos\left(\vphi^{\rm next}\right)}$, conditioned on intervals $0\sim10 \%, 40\sim50 \%$ and $90\sim100 \%$ of the $p_{\cos\left(\vW^E\right)}$. E: Average value of $p_{\cos\left(\vphi^{\rm next}\right)}$ within each interval of $p_{\cos\left(\vW^E\right)}$.}
    \label{fig:llama_train}
\end{figure}

\newpage

\section{Theoretical Details}
\subsection{Proof of Proposition~\ref{prop:dynamics_emb}}

\begin{lemma}\label{lem:softmax}
    Given a model $F$ and data pair $\left(\vX,y\right)\in\sN^{+,L}\times \sN^{+}$, $\ell=-\log\text{Softmax}\left(F\left(\vX\right)\right)_{y}$, we have that
    \begin{equation}
        \frac{\partial \ell}{\partial F\left(\vX\right)}=\vp-\ve_{y},
    \end{equation}
    where $\vp=\text{softmax}\left(\vX\right).$
\end{lemma}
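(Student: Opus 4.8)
The plan is to reduce the claim to an elementary differentiation of the log-sum-exp function, exploiting that $\ell$ depends on $F$ only through the logit vector. First I would abbreviate $\vz := F\left(\vX\right)\in\sR^{d_{\rm vob}}$, so that, unpacking the definition of Softmax and using the logarithm quotient rule,
\begin{equation*}
    \ell = -\log\frac{\exp \vz_y}{\sum_{j=1}^{d_{\rm vob}}\exp \vz_j} = -\vz_y + \log\sum_{j=1}^{d_{\rm vob}}\exp \vz_j .
\end{equation*}
The statement $\partial\ell/\partial F\left(\vX\right)=\vp-\ve_y$ is then a claim about the gradient $\nabla_{\vz}\ell$, which I would verify coordinate by coordinate.

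Next, fixing an index $k\in\{1,\dots,d_{\rm vob}\}$, I would differentiate the two summands of $\ell$ with respect to $\vz_k$. The first contributes $\partial(-\vz_y)/\partial\vz_k = -\delta_{ky} = -(\ve_y)_k$. For the second, the chain rule gives $\partial/\partial\vz_k \log\sum_j\exp\vz_j = \left(\sum_j\exp\vz_j\right)^{-1}\exp\vz_k = \text{Softmax}\left(\vz\right)_k = \vp_k$. Adding the two contributions yields $\partial\ell/\partial\vz_k = \vp_k - (\ve_y)_k$ for every $k$, and assembling these into a vector gives exactly the asserted identity.

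Since every step is routine, there is no genuine obstacle; the only points needing a little care are the splitting of the logarithm in the first display and the application of the chain rule to the log-sum-exp term. I would also note explicitly that the conclusion does not use any structure of $F$ — it follows solely from the cross-entropy-after-softmax form of $\ell$ — which is precisely the property that lets it be substituted into the gradient-flow computations underlying Propositions~\ref{prop:dynamics_emb} and~\ref{prop:dynamics_unemb}.
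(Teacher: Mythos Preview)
Your proposal is correct and follows essentially the same route as the paper: both rewrite $\ell=-F(\vX)_y+\log\sum_j\exp F(\vX)_j$ and then differentiate coordinate-wise, obtaining $-\delta_{ky}$ from the first term and the softmax entry from the log-sum-exp term. The only cosmetic difference is your introduction of the shorthand $\vz=F(\vX)$ and the closing remark about model-independence, which the paper leaves implicit.
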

\begin{proof}
    It's noted $\ell = -F\left(\vX\right)_y+\log \sum_{j=1}^{d_{\rm vob}}\exp F\left(\vX\right)_j$, then we have 
    \begin{align*}
        \frac{\partial \ell}{\partial F\left(X\right)_i}= -\delta_{i=y}+\frac{\exp F\left(\vX\right)_i}{\sum_{j=1}^{d_{\rm vob}}\exp F\left(\vX\right)_j}=\vp_i-\delta_{i=y},
    \end{align*}
    where $\delta_{i=y}=1$ if $i=y$ else $0$. This indicates that $\frac{\partial \ell}{\partial F\left(\vX\right)}=\vp-\ve_{y}$.
\end{proof}

With Lemma~\ref{lem:softmax}, we could obtain the derivative of $\ell$ with respect to $\vW^E_x$ for any $x\in\fV$ as follows:
\begin{align*}
    \frac{\partial \ell^i}{\partial \vW^E_x}&=\frac{\partial F\left(\vX^i\right)}{\partial\vW^E_{x}} \frac{\partial \ell^i}{\partial F\left(\vX^i\right)}\\
    &= \left(\vW^{U,T}\left(\vp^i-\ve_{y^i}\right)\right)\odot G^{\left(1\right)}\left(\vW^E_{\vX^i}\right).
\end{align*}
Then the gradient flow of $vW^E_x$ could be obtained by
\begin{align*}
    \frac{d\vW^E_x}{dt} =& -\frac{1}{N}\sum_{i=1}^N\frac{\partial \ell^i}{\partial \vW^E_x}=\frac{1}{N}\sum_{i=1}^N\left(\vW^{U,T}\left(\vp^i-\ve_{y^i}\right)\right)\odot G^{\left(1\right)}\left(\vW^E_{\vX^i}\right),
\end{align*}
Since ${\rm diag}\left(G^{\left(1\right)}\left(\vW^E_{\vX^i}\right)\right)=0$ if $x\notin \vX^i$, we have that
\begin{align*}
    \frac{d\vW^E_x}{dt}&=\frac{1}{N}\sum_{i=1}^{N_x^{\rm in}}\left(\vW^{U,T}\left(\ve_{y^i_x}-\vp^i_x\right)\right)\odot G^{\left(1\right)}\left(\vW^E_{\vX^i_x}\right)\\
    &=\frac{r_x^{\rm in}}{N_x^{\rm in}}\sum_{i=1}^{N_x^{\rm in}}\left(\vW^{U,T}\left(\ve_{y^i_x}-\vp^i_x\right)\right)\odot G^{\left(1\right)}\left(\vW^E_{\vX^i_x}\right).
\end{align*}
Since that $y^i_x$ takes value $\nu\in\fV$, we can rewrite this formation as
\begin{align*}
     \frac{d \vW^{E}_x}{dt} =\sum_{\nu\in\fV}\frac{r_{x,\nu}}{N_{x,\nu}}\left(\vW^{{U}, T}\ve_{\nu}\right)\odot\sum_{i=1}^{N_{x,\nu}}G^{(1)}\left(\vW^E_{\vX_{\left(x,\nu\right)}^i}\right) -\frac{r^{\rm in}_x}{N^{\rm in}_x}\sum_{i=1}^{N^{\rm in}_x}G^{(1)}\left(\vW^E_{\vX^i_x}\right)\odot\left(\vW^{{U}, T}\vp_{x}^i\right).
\end{align*}

\subsection{Proof of Proposition~\ref{prop:dynamics_unemb}}
Similar with the analysis of $\vW^E_x$, we derive the gradient flow of $\vW^U_\nu$ as follows:

\begin{align*}
    \frac{d\vW^U_\nu}{dt} =& -\frac{1}{N}\sum_{i=1}^{N}\frac{\partial \ell^i}{\partial \vW^U_{\nu}}\\
    =& \frac{1}{N}\sum_{i=1}^N \left(\ve_{y^{i,\nu}}-\vp^{i,\nu}\right)\left[G\left(\vW^E_{\vX^i}\right)\right]^T.
\end{align*}
Since $\ve_{y^{i,\nu}}=1$ if $y^i=\nu$ else 0, we have that
\begin{align*}
    \frac{d\vW^U_\nu}{dt} =\frac{r^{\rm out}_{\nu}}{N^{\rm out}_{\nu}}\sum_{i=1}^{N_{\nu}^{\rm out}} \left[G\left(\vW^E_{\vX^i_{\left(\cdot,\nu\right)}}\right)\right]^T - \frac{1}{N}\sum_{i=1}^N\vp^{i,\nu}\left[G\left(\vW^E_{\vX^i}\right)\right]^T.
\end{align*}

\subsection{Proof of Corollary~\ref{cor:emb_linear}}
With proposition~\ref{prop:dynamics_emb}, we have that 
\begin{align*}
    \frac{d \vW^{E}_{\alpha}}{dt} &=\vW^{{U}, T}\left(\sum_{\nu\in\fV}r_{\alpha,\nu}\ve_{\nu} -\frac{r^{\rm in}_\alpha}{N^{\rm in}_\alpha}\sum_{i=1}^{N^{\rm in}_x}\vp_{\alpha}^i\right)\\
    &= \vW^{U,T}r^{\rm in}_\alpha\left(\sum_{\nu\in\fV}\frac{r_{\alpha,\nu}}{r_{\alpha}^{\rm in}}\ve_{\nu} -\frac{1}{N^{\rm in}_\alpha}\sum_{i=1}^{N^{\rm in}_x}\vp_{\alpha}^i\right).
\end{align*}
Utilizing that ${\rm softmax}\left(\vf\right)= \frac{1}{d_{\rm vob}}\mathbf{1}+\frac{1}{d_{\rm vob}}\vf+\mathcal{O}\left(d_{\rm vob}^{-2}\vf\right)$, we obtain that
\begin{align*}
    \frac{d \vW^{E}_{\alpha}}{dt} &= \vW^{U,T}r^{\rm in}_\alpha\left(\sum_{\nu\in\fV}\frac{r_{\alpha,\nu}}{r_{\alpha}^{\rm in}}\ve_{\nu} -\frac{1}{N^{\rm in}_\alpha}\sum_{i=1}^{N^{\rm in}_\alpha}\left(\frac{1}{d_{\rm vob}}\mathbf{1}-\frac{1}{d_{\rm vob}}\vW^U\left(\vW^E_{z_i}+\vW^E_{\alpha_i}+\vW^E_{\alpha}\right)+\mathcal{O}\left(d_{\rm vob}^{-2}\vW^U\vW^E_\alpha\right)\right)\right)\\
    &= \vW^{U,T}r^{\rm in}_\alpha\left(\sum_{\nu\in\fV}\frac{r_{\alpha,\nu}}{r_{\alpha}^{\rm in}}\ve_{\nu} -\frac{1}{d_{\rm vob}}\mathbf{1} +\frac{1}{d_{\rm vob}}\vW^U\left(\frac{1}{N^{\rm in}_\alpha}\sum_{i=1}^{N^{\rm in}_\alpha}\left(\vW^E_{z_i}+\vW^E_{\alpha_i}\right)+\vW^E_{\alpha}\right)+\mathcal{O}\left(d_{\rm vob}^{-2}\vW^U\vW^E_\alpha\right)\right)\\
    &= \vW^{U,T}r^{\rm in}_\alpha\left(\sum_{\nu\in\fV}\frac{r_{\alpha,\nu}}{r_{\alpha}^{\rm in}}\ve_{\nu} -\frac{1}{d_{\rm vob}}\mathbf{1} +\frac{1}{d_{\rm vob}}\vW^U\left(\sum_{x\in\left(\fZ\cap\fA\right)}\frac{N_{\alpha,x}^{\rm in}}{N_{\alpha}^{\rm in}}\vW^E_{x}+\vW^E_{\alpha}\right)+\mathcal{O}\left(d_{\rm vob}^{-2}\vW^U\vW^E_\alpha\right)\right).
\end{align*}

Let $N\rightarrow\infty$, we have that
\begin{align*}
     \frac{d \vW^{E}_{\alpha}}{dt} =&\vW^{U,T}r^{\rm in}_\alpha\left(\sum_{\nu\in\fV}\mathbb{P}_{\pi}\left(y=\nu\mid \alpha\in\vX\right)\ve_{\nu}-\frac{1}{d_{\rm vob}}\mathbf{1}\right.\\ &\left.+\frac{1}{d_{\rm vob}}\vW^U\left(\sum_{x\in\left(\fZ\cap\fA\right)}\mathbb{P}_\pi\left(x\in\vX\mid \alpha\in\vX\right)\vW^E_{x}+\vW^E_{\alpha}\right)+\mathcal{O}\left(d_{\rm vob}^{-2}\vW^U\vW^E_\alpha\right)\right)\\
     =& \vW^{U,T}r_\alpha^{\rm in}\left(\vphi_\alpha^{y}+\frac{1}{d_{\rm vob}}\vW^U\vW^{E}\vphi_{\alpha}^{\vX}\right) + \veta,
\end{align*}
where $\veta=\vW^{U,T}r_{\alpha}^{\rm in}\left(\frac{1}{d_{\rm vob}}\left(\vW^U\vW^E_\alpha-\mathbf{1}\right)+\mathcal{O}\left(d_{\rm vob}^{-2}\vW^U\vW^E_\alpha\right)\right)$ contains the higher-order term and the data independent term.

\subsection{Proof of Corollary~\ref{cor:emb_nonlinear}}\label{proof:emb_nonlinear}
\begin{proof}
    Since the small initialization, we assume that the activation function can be approximated by the following form with the Weierstrass approximation theorem.
    \begin{align*}
        \sigma\left(\sum_{x\in\vX}\vW^E_{x}\right)=C_0+ C_1\left(\sum_{x\in\vX}\vW_x^{E}\right)+C_2\left(\sum_{x\in\vX}\vW_x^{E}\right)^{\odot 2}+\vepsilon.
    \end{align*}
    With the loss of the generalization, we assume that $C_0=0,C_1=1,C_2=\frac{1}{2}$. Then we have
    \begin{align*}
        \frac{d \vW^{E}_{\alpha}}{dt} =& \underbrace{\sum_{\nu\in\fV} \frac{r_{\alpha,\nu}}{N_{\alpha,\nu}}\left(\vW^{{U}, T}\ve_{\nu}\right)\odot\sum_{i=1}^{N_{\alpha,\nu}}\left(\mathbf{1}+\sum_{x\in\vX_{(\alpha,\nu)}^i}\vW^E_{\vX_{(\alpha,\nu)}^i}\right)}_{\vJ^{y}} \\
        &-\underbrace{\frac{r^{\rm in}_\alpha}{N^{\rm in}_\alpha}\sum_{i=1}^{N^{\rm in}_\alpha}\left(\mathbf{1}+\sum_{x\in\vX_{\alpha}^i}\vW^E_{\vX_{\alpha}^i}\right)\odot\left(\vW^{{U}, T}\vp_{\alpha}^i\right)}_{\vJ^p}.\\
    \end{align*}
    For the term $\vJ^y$ we have 
    \begin{align*}
        \vJ^y&=\vW^{{U}, T}\sum_{\nu\in\fV} r_{\alpha,\nu}\ve_{\nu} + \sum_{\nu\in\fV} r_{\alpha,\nu}\left(\vW^{{U}, T}\ve_{\nu}\right)\odot \left(\vW^{E}_{\alpha}+\frac{1}{2N_{\alpha,\nu}}\sum_{i=1}^{2N_{\alpha,\nu}}\vW_{x^i_{(\alpha,\nu)}}^E\right).
    \end{align*}
    Let $N\rightarrow\infty$, we have that
    \begin{align*}
        \vJ^y &= \vW^{U,T}r^{\rm in}_{\alpha}\vphi_{\alpha}^y \odot\left(\mathbf{1}+\vW^E_{\alpha}\right) + \sum_{\nu\in\fV}{\rm diag}\left(\vW^{U}_{\nu}\right)r_{\alpha,\nu}\sum_{x^\prime\in\fV}\mathbb{P}\left(x^{\prime}\in\vX\mid \alpha \in\vX,y=\nu\right)\vW^E_{x^\prime}\\
        &= \vW^{U,T}r^{\rm in}_{\alpha}\vphi_{\alpha}^y \odot\left(\mathbf{1}+\vW^E_{\alpha}\right) + \sum_{\nu\in\fV}{\rm diag}\left(r_{\alpha,\nu}\vW^{U}_{\nu}\right)\vW^E\left(\vphi_{\alpha}^{\vX\mid y,T}\right)_{\nu}^T\\
        &= \vW^{U,T}r^{\rm in}_{\alpha}\vphi_{\alpha}^y \odot\left(\mathbf{1}+\vW^E_{\alpha}\right) + \mathbb{T}\cdot \left(\vphi_{\alpha}^{\vX\mid y}\right)^T,
    \end{align*}
    where $\mathbb{T}\in\sR^{d\times d_{\rm vob}\times d_{\rm vob}}$, $\mathbb{T}_{:,:,\nu}=r_{\alpha,\nu}{\rm diag}\left(\vW^{U}_\nu\right)\vW^{E}$ for $\nu\in\fV$ and 0 otherwise.

    Similarly, for the term $\vJ^p$, we have that
    \begin{align*}
        \vJ^p =& \vW^{U,T}r^{\rm in}_\alpha\left(\frac{1}{d_{\rm vob}}\mathbf{1} -\frac{1}{d_{\rm vob}}\vW^U\left(\left(\vW^{E}-{\rm diag}\left(\vW^{U,T}\mathbf{1}\right)\right)\vphi_{\alpha}^{\vX}+\vW^E_{\alpha}\right)+\vepsilon\right),
    \end{align*}
    where $\vepsilon=\mathcal{O}\left(\frac{1}{d^2_{\rm vob}}\vW^U\vW^E_{\alpha}\right)$. Then we have that
    \begin{align*}
        \frac{d \vW^{E}_{\alpha}}{dt}= \mathbb{T}\cdot \left(\vphi_{\alpha}^{\vX\mid y}\right)^T + \veta_{\vphi_{\alpha}^{y}}+ \frac{1}{d_{\rm vob}}\veta_{\vphi_{\alpha}^{\vX}},
    \end{align*}
    where $\veta_{\vphi_{\alpha}^{y}}= \vW^{U,T}r^{\rm in}_{\alpha}\vphi_{\alpha}^y \odot\left(\mathbf{1}+\vW^E_{\alpha}\right)$, $\veta_{\vphi_{\alpha}^{\vX}}=d_{\rm vob}\vJ^p$.
\end{proof}

\subsection{Proof of Corollary~\ref{cor:unemb_linear}}
\begin{proof}
    With Proposition~\ref{prop:dynamics_unemb}, we have that
    \begin{align*}
        \frac{d \vW^{U}_{\nu}}{dt} =& \frac{r^{\rm out}_{\nu}}{N^{\rm out}_{\nu}}\sum_{i=1}^{N^{\rm out}_{\nu}}\left (\sum_{x\in \vX_{\left(\cdot,\nu\right)}^i}\vW^{E}_{x}\right)^T-\frac{1}{N}\sum_{i=1}^N\vp^{i,\nu} \left (\sum_{x\in \vX^i}\vW^{E}_{x}\right)^T\\
        =& Lr^{\rm out}_{\nu}\sum_{x\in\fV}\mathbb{P}_\pi\left(x\in\vX\mid y=\nu\right)\vW^{E,T}_x - L\sum_{x\in\fV}\mathbb{E}_{\pi}\left[\vp^{\nu}\mid x\in\vX\right]\vW^{E,T}_{x}\\
        =& Lr^{\rm out}_{\nu}\left(\vW^{E}\vvarphi_{\nu}^{\vX}\right)^T - \veta,
    \end{align*}
    where $\veta=L\left(\vW^{E}\mathbb{E}_{\pi}\left[\vp\mid x\in\vX\right]\right)^T$.
\end{proof}

\subsection{Proof of Corollary~\ref{cor:lan}}
\begin{proof}
    The next-token-prediction training loss could be formulated as
   \begin{equation*}
    \ell^i = \frac{1}{L}\sum_{t=1}^{L-1}{\rm CrossEntropy}\left(F_{\rm lan}\left(\vX_{:t}\right);\ve_{\vX_{t+1}}\right). 
    \end{equation*}
    So we have that 
    \begin{align*}
        \frac{\partial \ell^i}{\partial \vW^E_s} = \frac{1}{L}\sum_{t=1}^{L-1}\vW^{U,T}\left(\vp^i_t-\ve_{\vX^i_{t+1}}\right)\odot\left(\delta_{\vX^i_{t}=s}\mathbf{1}+\tilde{F}^{\left(1\right)}\left(\vX^i_{:t}\right)\right).
    \end{align*}
    Furthermore, we have that
    \begin{align*}
        \frac{d\vW^E_s}{dt} =& \frac{1}{NL}\sum_{i=1}^N\sum_{t=1}^{L-1}\vW^{U,T}\left(\ve_{\vX^i_{t+1}}-\vp_t^i\right)\odot\left(\delta_{\vX^i_{t}=s}\mathbf{1}+\tilde{F}^{\left(1\right)}\left(\vX^i_{:t}\right)\right)\\
        =&\frac{1}{NL}\vW^{U,T}\sum_{i=1}^N\sum_{t=1}^{L-1}\delta_{\vX^i_{t}=s}\ve_{\vX^i_{t+1}}+\frac{1}{NL}\vW^{U,T}\sum_{i=1}^N\sum_{t=1}^{L-1}\ve_{\vX^i_{t+1}}\odot \tilde{F}^{\left(1\right)}\left(\vX^i_{:t}\right)\\
        &- \frac{1}{NL}\sum_{i=1}^N\sum_{t=1}^{L-1}\vW^{U,T}\vp_t^i\odot \left(\delta_{\vX^i_{t}=s}\mathbf{1}+\tilde{F}^{\left(1\right)}\left(\vX^i_{:t}\right)\right).
    \end{align*}
    Since the small initialization, assuming that $||\vW||_{\infty}= \mathcal{O}\left(d^{-\gamma}\right)$ for any trainable parameter matrix $\vW$, we have that $||\tilde{F}^{\left(1\right)}\left(\vX^i_{:t}\right)||_{\infty}=\mathcal{O}\left(d^{1-2\gamma}\right)$ in the initial stage. Let $N\rightarrow\infty$, we have that
    \begin{align*}
        \frac{d\vW^E_s}{dt} = r^{\rm in}_{s}\vW^{U,T}\left(\vphi^{\rm next}_s - \veta^E\right),
    \end{align*}
    where $\veta^E = \sum_{t=1}^{L-1}\mathbb{E}_{\pi}\left[\vp\mid \vX_t=s\right] + \mathcal{O}\left(d^{1-2\gamma}\vphi^{\rm next}_s\right)$.
    Similarly, we have that
    \begin{align*}
        \frac{d\vW^U_{s}}{dt} = \frac{1}{NL}\sum_{i=1}^N\sum_{t=1}^{L-1}\left(\delta_{\vX^i_{t+1}=s}-\vp^{i,s}_{\vX^i_{:t}}\right)\left(\vW^{E,T}_{\vX^{i}_{t}}+\tilde{F}\left(\vX^i_{:t}\right)^T\right),
    \end{align*}
    where $\vp^{i,s}_{\vX^i_{:t}}$ means the $s$-th element of the output probability with input sequence $\vX^i_{:t}$. Let $N\rightarrow\infty$, we have 
    \begin{align*}
        \frac{d\vW^U_{s}}{dt} = r^{\rm out}_s\left(\vW^E\vvarphi^{\rm pre}_s\right)^T+\veta^U,
    \end{align*}
    where $\veta^U=\sum_{t=1}^{L-1}\mathbb{E}_{\pi}\left[\vp^s_{\vX_{:t}}\vW^{E,T}_{\vX_{t}}\right]+\mathcal{O}\left(r^{\rm out}_sd^{1-2\gamma}\left(\vW^{E}\vvarphi^{\rm pre}_s\right)^T\right)$.
\end{proof}

\end{document}